\newcommand{\AlgDC}{{\textsc{DC}}\xspace}
\newcommand{\AlgRC}{{\textsc{Repetitive-DC}}\xspace}
\newcommand{\AlgRS}{{\textsc{Random-Sampling}}\xspace}
\newcommand{\AlgUS}{{\textsc{Uncertainty-Sampling}}\xspace}
\newcommand{\AlgQB}{{\textsc{Query-by-Bagging}}\xspace}
\newcommand{\AlgSP}{{\textsc{Spectral}}\xspace}
 \newcommand{\AlgDCt}{{\textsc{$\text{DC}^2$}}\xspace}
\providecommand{\algorithmname}{Algorithm}
\newcounter{ex}
\theoremstyle{plain}
\newtheorem{thm}{\protect\theoremname}
  \theoremstyle{plain}
  \theoremstyle{plain}
  \newtheorem{lem}[thm]{\protect\lemmaname}
    \theoremstyle{definition}
    \newtheorem{example}[ex]{\protect\examplename}
    \theoremstyle{remark}
\newcommand{\R}{\mathbb{R}}
  \providecommand{\lemmaname}{Lemma}
  \providecommand{\propertyname}{Property}
\providecommand{\theoremname}{Theorem}
\providecommand{\examplename}{Example}
\global\long\def\p{\mathrm{Pr}}
\global\long\def\argmax{\operatorname*{arg\, max}}
\global\long\def\span{\operatorname{span}}
\global\long\def\sgn{{\textbf{sign}}}
\global\long\def\h{h^{*}}
\global\long\def\ep{\rho}
\global\long\def\t{T_{\epsilon,\delta}}
\global\long\def\r{\rho}
\title{Near-Optimal Active Learning of Halfspaces via Query Synthesis in the Noisy Setting}
\author{Lin~Chen$ ^{1,2} $ \and Hamed~Hassani$ ^3 $ \and Amin~Karbasi$ ^{1,2} $\\
$^1$Department of Electrical Engineering,
$^2$Yale Institute for Network Science, Yale University\\
$^3$Computer Science Department, ETH Z{\"u}rich\\
\{lin.chen, amin.karbasi\}@yale.edu, hamed@inf.ethz.ch
	}
\begin{document}
	%\setstretch{0.913}
	%\linespread{0.5}
	%\renewcommand{\baselinestretch}{0.90}
	%\vspace{-1cm}
	\maketitle
	\begin{abstract}
% !TEX root = ./bisec-long.tex
%\begin{quote}
In this paper, we consider the problem of actively learning a linear classifier through query synthesis where the learner can construct  artificial queries in order to estimate the true decision boundaries. This problem has recently gained a lot of interest in automated science and adversarial reverse engineering for which only heuristic algorithms are known. In such applications, queries can be constructed \textit{de novo} to elicit information  (e.g., automated science) or to evade detection with minimal cost (e.g., adversarial reverse engineering).   
We develop a general framework, called \emph{dimension coupling} (\AlgDC), that 1)  reduces a $d$-dimensional learning problem to  $d-1$ low-dimensional sub-problems, 2) solves each sub-problem efficiently, 3) appropriately aggregates the results and outputs a linear classifier, and 4) provides a theoretical guarantee for all possible schemes of aggregation.  
The proposed method is proved resilient to noise.
%We consider the two most common  scenarios in the literature:  idealized noise-free and independent noise realizations.
%	, and agnostic settings.
	   We show that  the \AlgDC framework avoids the curse of dimensionality: its computational complexity 
%	   in all three cases 
	   scales  linearly with the dimension.  Moreover,
%	    in the noiseless and noisy cases,
	      we show that the query complexity of \AlgDC is near optimal (within a constant factor of the optimum algorithm). 
%	We also develop an agnostic variant of DC for which we  provide strong theoretical guarantees.
	 To further support our theoretical analysis, we compare the performance of \AlgDC with the existing work. We observe that \AlgDC consistently outperforms the prior arts in terms of query complexity while often running orders of magnitude  faster. 
	\end{abstract}
	%\renewcommand{\baselinestretch}{0.7}
% !TEX root = ./bisec-long.tex
\section{Introduction}
In contrast to  the passive model of supervised learning,  where all the labels  are provided without any interactions with the learning mechanism, the key insight in active learning is that the learning algorithm can perform significantly better if it is allowed to   choose which data points to label. This approach has found far-reaching  applications, including the classical problems in AI (e.g., classification \cite{tong2002support}, information retrieval \cite{tong2001support}, speech recognition \cite{hakkani2002active}) as well as the  modern ones (e.g., interactive recommender systems \cite{karbasi2012comparison} and optimal decision making \cite{javdani2014near}). 
In all the above applications,  the unlabeled data are usually abundant and easy to obtain, but training labels are either  time-consuming or expensive to acquire (as they  require asking an expert).  

Throughout this paper, our objective is to actively learn an unknown
halfspace $H^{*}=\{x\in\mathbb{R}^{d}:\left\langle h^{*},x\right\rangle >0\}$ via \emph{query synthesis} (a.k.a.~membership queries), where  $\left\langle \cdot,\cdot\right\rangle$ denotes the standard inner
product of the Euclidean space and $h^{*}$ is the unit normal vector of the halfspace we want to learn. We would like to note that learning the halfspace $ H^* $ is mathematically equivalent to learning its unit normal vector $ h^* $; therefore we focus on learning $ h^* $ hereinafter. In addition, it should be noted that using the kernel trick we can easily extend the halfspace learning to more complex (e.g., non-linear) decision boundaries \cite{shawe2004kernel}.

The hypothesis space $ \mathcal{H} $, which consists of all possibilities of unit normal vectors, is the unit sphere $ S^{d-1}=\{x\in\mathbb{R}^{d}:\left\Vert x\right\Vert =1\} $, where $\Vert \cdot \Vert$  denotes the standard Euclidean norm.
%(also $\Vert \cdot \Vert$ hereinafter denotes the Euclidean norm)

In active learning of halfspaces via query synthesis, the algorithm is allowed to query whether any point $ x $ in $ \mathbb{R}^d $ resides in the true halfspace. When the algorithm queries $ x $, the true outcome is $ \sgn(\left\langle h^{*},x \right\rangle )\in \{1,-1\}$. When $ \sgn(\left\langle h^{*},x \right\rangle )=1$, it means that $ x\in {H}^* $; otherwise, $ x\notin {H}^* $. We should note here that  the only information we obtain from a query is the \emph{sign} of the inner product rather than the value.
For example, the queries of the form $ \sgn(\left\langle h^{*}, e_i \right\rangle )$, where $e_i$ is the $i$th standard basis vector, will only reveal the 
\emph{sign} of the $i$th component of $h^*$ (and nothing further about its value). 

 In the noiseless setting, we observe the true outcome of the query,  
i.e. $\sgn \left\langle h^{*},x \right\rangle \in\{1,-1\}$. In the noisy setting, 
the outcome is a flipped version of the true sign with independent flip probability $\rho$. 
That is, denoting the outcome by $y$ we have
$y \in \{-1,1\}$ and  $\p[y \neq \sgn \left\langle h^{*},x \right\rangle] \triangleq \rho <1/2$.

Since the length of the selected vector $x$ will not affect the outcome of the query, we only query
the points on the unit sphere $S^{d-1}=\{x\in\mathbb{R}^{d}:\left\Vert x\right\Vert =1\}.$
Hence, we term $\mathcal{X}=S^{d-1}$ as the \emph{query space}.

 Given $\epsilon, \delta >0$, we would like to seek an active learning algorithm 
 that (i) adaptively selects vectors $x_1, x_2, \ldots\in \mathcal{X}$, (ii) observes the (noisy) 
 responses to each query $\sgn \langle h^*, x_i \rangle$, (iii) and outputs, 
 using as few queries as possible, an estimate $\hat{h}$ of $h^*$ such that $\Vert \hat{h} - h^* \Vert < \epsilon$ with probability at 
 least $1-\delta$. 

Our main contribution in this paper is to develop a noise resilient active learning algorithm that has access to \emph{noisy} membership queries. To the best of our knowledge, we are the first to show a near-optimal algorithm that outperforms in theory and practice the naive repetition mechanism and the recent spectral heuristic methods \cite{AAAI}. Specifically, we develop a framework, called Dimension Coupling (\AlgDC), with the following guarantees. Its query complexity is ${O}(d(\log \frac{1}{{\epsilon}} + \log \frac{1}{{\delta}}))$ and its computational complexity  is ${O}(d(\log \frac{1}{{\epsilon}} + \log \frac{1}{{\delta}})^2)$. In particular, in the noiseless setting ($ \rho=0 $), both  its  computational complexity  and  query complexity are  ${O}(d \log\frac{1}{{\epsilon}})$.  Note that in both settings the computational complexity scales linearly with the dimension. Moreover, the query complexity in both settings is near-optimal. 
%Finally, we generalize our results to the agnostic case.
Our empirical experiments demonstrate that \AlgDC runs orders of magnitude faster than the existing methods.

The rest of the paper is structured as follows. 
In Section~\ref{sec:DC2}, we start with investigating this problem in the $ 2 $-dimensional case and present an algorithm called \AlgDCt. Then in Section~\ref{sec:DC} we generalize it to the  $ d $-dimensional case and present a general framework called \AlgDC.
%In Section~\ref{sec:DC}, we present the dimension coupling based framework termed \AlgDC, which divides a $ d $-dimensional problem to $ 2 $-dimensional subproblems that can be solved by \AlgDCt (Section~\ref{sec:DC2}). 
Empirical results are shown in Section~\ref{sec:expr}. We extensively review related literature in Section~\ref{sec:related}.

% !TEX root = ./bisec-long.tex
%\vspace{-0.2cm}

\section{\AlgDCt: Solving the $ 2 $-Dimensional Problem} \label{sec:DC2}
To gain more intuition before studying the general $ d $-dimensional problem, it might be beneficial to study a special case where the dimension is two. In other words, we study in this section how to learn the normalized projection of the true unit normal vector $ h^*\in \mathbb{R}^d $ onto $ \span\{e_1,e_2\} $, where $ e_1 , e_2\in \mathbb{R}^d $ are two orthonormal vectors and $ \span\{e_1,e_2\} $ is the linear subspace spanned by $ e_1 $ and $ e_2 $. We should note here that the underlying space is still $ d $-dimensional (i.e., $ \mathbb{R}^d $) but our goal is not to learn $ h^* $ \textit{per se} but its normalized projection onto a $ 2 $-dimensional subspace.

Formally, given two orthonormal vectors $e_1, e_2$ we denote the (normalized) projection of $h^*$ onto $\span\{e_1, e_2\}$ by $h^\perp$, i.e., 
\begin{equation}
h^\perp = \frac{\left\langle h^{*},e_{1}\right\rangle e_{1}+\left\langle h^{*},e_{2}\right\rangle e_{2}}{\left\Vert \left\langle h^{*},e_{1}\right\rangle e_{1}+\left\langle h^{*},e_{2}\right\rangle e_{2}\right\Vert_2 } . 
\end{equation}
Our objective 
%of \AlgDCt$(e_1, e_2, \epsilon, \delta)$
 is to find a unit vector $\hat{e} \in \span\{e_1, e_2\}$ such that $\Vert \hat{e} - h^\perp \Vert < \epsilon$.
In fact, we require the latter to hold with probability at least $1-\delta$.  

We should emphasize that noise, characterized by independent flip probability $ \rho $, is generally present. In the $ 2 $-dimensional problem, one may propose to use the simple binary search (a detailed discussion with examples is presented in Appendix~\ref{sub:noiseless}) to find a unit vector $ \hat{e} $ that resides $ \epsilon $-close to $ h^\perp $. To make it noise-tolerant, when the binary search algorithm queries a point, say $ x_i $, we query it $ R $ times to obtain $ R $ noisy versions of $ \sgn\left\langle h^*,x_i\right\rangle $ and view the majority vote of the noisy versions as the true outcome~\cite{kaariainen2006active,karp2007noisy,nowak2011geometry}. We call this method \emph{repetitive querying}. However, its query complexity is $ O(\log(1/\epsilon)(\log \log (1/\epsilon)+\log (1/\delta )) $, which is suboptimal both theoretically (we will prove this bound in \textbf{Appendix~\ref{sub:rep}}) and empirically (referred to as \AlgRC in Section~\ref{sec:expr}).

As a result, instead, we will present a Bayesian algorithm termed \AlgDCt that solves this $ 2 $-dimensional problem with query complexity $ O(\log (1/\epsilon)+\log(1/\delta) ) $.
%Before illustrating the algorithm \AlgDCt, let us review some notation.  
Recall that any unit vector inside $\span\{e_1, e_2\}$, e.g., $h^\perp$, can equivalently be represented as a pair $(c_1, c_2)$ on the two-dimensional unit circle $S^1$ (e.g., $h^\perp = c_1 e_1 + c_2 e_2$ and $c_1^2 + c_2^2 = 1$). To simplify notation, we use a point $(c_1,c_2) \in S^1$ and its corresponding unit vector $c_1 e_1 + c_2 e_2$ interchangeably.  In this setting, it is easy to see that for any $x \in \span\{e_1, e_2 \}$  
\vspace{-.1cm}
\begin{equation} \label{eq-proj}
\sgn \left\langle x, h^{*}\right\rangle =  \sgn  \left\langle x,  h^\perp   \right\rangle.
\end{equation}
We take a Bayesian approach. In the beginning, when no queries have been performed, \AlgDCt assumes no prior information about the vector $h^\perp$. Therefore, it takes the uniform distribution on $S^1$ (with pdf $p_0(h) = \frac{1}{2 \pi}$) as its prior belief about $h^\perp$. After performing each query, the posterior (belief) about $h^\perp$ will be updated according to the observation. 
We let $p_m(h)$ denote the (pdf of the) posterior after performing the first $m$ queries.  
In this manner, \AlgDCt runs in total of $T_{\epsilon, \delta}$ rounds, where in each round a specific query is selected and  posed to the oracle. The number $T_{\epsilon, \delta}$ will be specified later (see Theorem~\ref{thm:noisy}).
%s~\ref{thm:noiseless} and 
 Upon the completion of round $T_{\epsilon, \delta}$, the algorithm returns as its final output a vector $\hat{e} \in S^1$ that maximises the posterior pdf $p_{T_{\epsilon, \delta}}(h)$. If there are multiple such maximisers, it picks one arbitrarily.  We now proceed with a detailed description of \AlgDCt (a formal description is provided in Algorithm~\ref{alg:DC2}). 
%We first consider the simpler noise-free case and the other settings will follow afterwards. 
\begin{algorithm}[tb]
	
	\begin{algorithmic}[1]
		\REQUIRE orthonormal vectors
		$e_{1},e_{2}$, estimation error  at most $\epsilon$, success probability at least $1-\delta$.
		\ENSURE a unit vector $ \hat{e} $ which is an estimate for the normalized orthogonal projection of $ h^* $ onto $ \span\{e_1,e_2\} $.
		\STATE Set $p_0(h)$ to be uniform, i.e., $\forall h \in S^1: p_0(h) = 1/2\pi$. 
		\FOR{ $m=1$ \TO $T_{\epsilon, \delta}$ }
		
			\STATE Find a vector $x_m \in S^1$ which is a solution to the following equation: $\int_{S^1}  \sgn \left \langle x, h \right \rangle p_{m-1}(h) dh = 0.$ If there are multiple solutions, choose one arbitrarily. 
			\STATE Ask from the oracle the value of $\sgn \left \langle x_m, h^* \right \rangle $.
			\STATE Based on the response obtained from the oracle, update the distribution $p_{m-1}(h)$ to $p_{m}(h)$.
		
		\ENDFOR 
		\RETURN $\hat{e} = \argmax_{h \in S^1} p_{T_{\epsilon, \delta}} (h)$. 
	\end{algorithmic}
	
%Input: orthonormal vectors
%$e_{1},e_{2}$, estimation error  at most $\epsilon$, success probability at least $1-\delta$.
%
%%\textbf{Output}: an estimate for the (normalised) projection of $h^{*}$
%%onto $\span\{e_1, e_2\}$, denoted by $h^\perp$. 
%
%\begin{enumerate}
%\item Set $p_0(h)$ to be uniform, i.e., $\forall h \in S^1: p_0(h) = 1/2\pi$. 
%\item for $m=1$ to $T_{\epsilon, \delta}$ 
%\begin{itemize}
%\item[(a)] find a vector $x_m \in S^1$ which is a solution to the following equation: $$\int_{S^1}  \sgn \left \langle x, h \right \rangle p_{m-1}(h) dh = 0.$$ If there are multiple solutions, choose one arbitrarily. 
%\item [(b)] ask from the oracle the value of $\sgn \left \langle x_m, h^* \right \rangle $.
%\item [(c)] based on the response obtained from the oracle, update the distribution $p_{m-1}(h)$ to $p_{m}(h)$.
%\end{itemize}
%\item[ ] end for 
%\item return $\hat{e} = \argmax_{h \in S^1} p_{T_{\epsilon, \delta}} (h)$. 
%\end{enumerate}

\protect\caption{\AlgDCt \label{alg:DC2}}

\end{algorithm}
%\vspace{-.3cm}

%\paragraph{Noisy Case:}\label{sec:noisy}
%In general, \AlgDCt follows a similar procedure as in the noiseless case except that the distributions $p_m$ does not look as simple.  
As shown in Algorithm~\ref{alg:DC2}, at each round, say round $ m+1 $, the algorithm maintains and updates the distribution $ p_{m} $ that encodes its current belief in the true location of $ h^\perp $. 
%However, as we now discuss,
 We should note here that these distributions can 
% still
  be stored efficiently  and as a result the vector $x_{m+1}$ can be computed efficiently. Indeed, (the pdf of) $p_m$ is 
piecewise constant on the unit circle (see Figure~\ref{ex-noisy}).
 More precisely, at any round $m$, there are at most $2m$ points $u_1, u_2, \cdots, u_{2m}$ that are ordered clock-wise on the unit-circle and 
$p_m$ is constant when restricted to each of the sectors $[u_i, u_{i+1})$.   The piecewise constant property of the pdf of $ p_m $ can be established by induction on $ m $. Recall that the initial distribution $ p_0 $ is uniform and thus piecewise constant. The Bayesian update step (line 5 of Algorithm~\ref{alg:DC2}) preserves this property when the algorithm updates the distribution $ p_m(h) $ to $ p_{m+1}(h) $. We will show why this is true when we discuss the Bayesian update step in detail.

At round $m+1$, in order to find $x_{m+1}$ (see line 3 of Algorithm~\ref{alg:DC2}), \AlgDCt first finds a line that passes through the centre of $S^1$ and cuts $S^1$ into two ``halves'' which have the same measure with respect to $p_m$.  Note that finding such a line can be done in $O(m)$ steps because $p_m$ has the piecewise constant property.   Once such a line is found, it is then easy to see that $x_{m+1}$ can be any of the two points orthogonal to the line.  As a result, \AlgDCt at round $m+1$ can find $x_{m+1}$ in $O(m)$ operations. We denote the half-circle containing $x_{m+1}$ by $R^+$ and the other half by $R^-$.  We refer to Figure~\ref{ex-noisy} for a schematic illustration. 

The key step in Algorithm~\ref{alg:DC2} is the Bayesian update (line 5).
Once a noisy response to the query $\sgn \left\langle x_{m+1}, h^{*}\right\rangle$ is obtained (line 4)), the probability distribution $p_{m}$ will be updated to $p_{m+1}$ in the following way. First, consider the event that the outcome of $\sgn \left\langle x_{m+1}, h^{*}\right\rangle$ is $+1$. We have
%\begin{eqnarray*}
$p_m (\sgn \left\langle x_{m+1}, h^{*}\right\rangle =+1 ) = (1-\rho) \,\,p_m (R^+) + \rho \, \, p_m(R^-) = 1/2$,
%\end{eqnarray*} 
and similarly $p_m (\sgn \left\langle x_{m+1}, h^{*}\right\rangle =-1 )  = 1 / 2$. Therefore, by Bayes theorem we obtain the following update rules for $p_{m+1}$. If we observe that $\sgn \left\langle x_{m+1}, h^{*}\right\rangle =+1$, then for $h \in R^+$ we have $$p_{m+1}(h) = 2(1-\rho)p_{m}(h) $$ and for $h \in R^-$ we have $$p_{m+1}(h) = (2\rho) p_{m}(h) .$$
Also, if we observe that $\sgn \left\langle x_{m+1}, h^{*}\right\rangle =-1$, then for $h \in R^+$ we have $$ p_{m+1}(h) = (2 \rho) p_{m}(h) $$ and for $h \in R^-$ we have $$p_{m+1}(h) = 2(1-\rho)p_{m}(h) . $$
Note that the factor of 2 here is due to the normalization.
It is easy to verify  that $p_{m+1}$ is also a piecewise constant distribution (now on $2(m+1)$ sectors; see Figure~\ref{ex-noisy}). 

Theorem~\ref{thm:noisy} shows that after $ T_{\epsilon,\delta}= O( \log \frac{1}{\epsilon} +\log \frac{1}{\delta})$ rounds, with probability at least $ 1-\delta $, \AlgDCt outputs a unit vector $\hat{e} \in \span\{e_1, e_2\}$ such that $\Vert \hat{e} - h^\perp \Vert < \epsilon$. Also, as discussed above, the computational complexity of \AlgDCt is $O(\t^2)$, i.e., ${O}( (\log \frac{1}{\epsilon} + \log \frac{1}{\delta} )^2)$.
\begin{figure}[t]
	\begin{center} 
		\includegraphics[width=7cm]{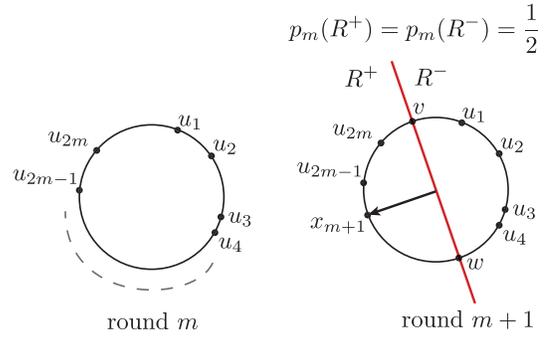}
		\vspace{-.3cm}
	\end{center}
	\caption{\footnotesize Upon the completion of round $m$ (left figure), the distribution (pdf of) $p_m$ is constant over each of the sectors $[u_i, u_{i+1})$. In the next round (right figure), 
		in order to find $x_{m+1}$, \AlgDCt first finds a diagonal
		line (red line) which separates two half-circles ($R^+$ and $R^-$) that each has measure $1/2$ w.r.t $p_m$. The vector $x_{m+1}$ will then be one of the two points on the unit circle that are orthogonal to this line. For updating $p_m$ to $p_{m+1}$, we note that  all the points inside $R^+$  get the same factor (either $2\rho$ or $2(1-\rho)$ depending on the outcome of the query). The same is true for $R^-$. Thus, $p_{m+1}$ is again a piecewise constant pdf but now on $2(m+1)$ sectors.}\label{ex-noisy}
	\vspace{-.2cm}
\end{figure}
%\frac{ D_\epsilon}{C_\epsilon}
\begin{thm} \label{thm:noisy}
	\emph{\textbf{(Proof in Appendix~\ref{sub:noisy})}} When the independent flip probability is $\rho$, having 
%	\vspace{-.1cm}
	\begin{equation} \label{T}
	T_{\epsilon, \delta} \geq M + \max\{T_0 , T_1 , T_2, T_3\} = O( \log \frac{1}{\epsilon} +\log \frac{1}{\delta})
	\end{equation} 
	is sufficient to guarantee that \AlgDCt outputs with probability at least $1-\delta$  a vector that is within a distance $\epsilon$ of $h^\perp$. 
	Here, we have $M = \lceil \frac{2\log \frac{2}{\delta} } { -\log(4\rho(1-\rho)) } \rceil $, $T_0 = \frac{8 \log \frac{2}{\delta}}{\log(2(1-\rho)) }$, $T_1 =\frac{8 \log \frac{1}{8 \pi \epsilon}}{\log(2(1-\rho)) }$, $T_2 = \frac{8}{\log(2(1-\rho))} \left(\log (2M) + \log (\frac{4}{\log(2(1-\rho))} ) \right) $ and  $T_3 =  \frac{ 24\rho \log^2 \frac{1-\rho}{\rho}}{\log^2(2(1-\rho))} \left(\log(M) + \log(\frac{4}{\delta})\right)$. 
\end{thm}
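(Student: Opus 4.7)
The plan is to bound the probability that the MAP output $\hat e$ of \AlgDCt falls outside an $\epsilon$-neighborhood of $h^\perp$ on the circle $S^1$. Since $p_{\t}$ is piecewise constant on at most $2\t$ sectors, it suffices to show that, with probability at least $1-\delta$, every sector whose distance from $h^\perp$ exceeds $\epsilon$ has strictly smaller density than the sector containing $h^\perp$. The strategy is to reduce this to a pairwise likelihood-ratio comparison, followed by a union bound over sectors.

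First I would unroll the Bayesian update to obtain, for every $h\in S^1$,
\[
p_m(h) = \frac{1}{2\pi}\prod_{i=1}^m 2q_i(h),\qquad q_i(h) = \begin{cases}1-\rho & \text{if } \sgn\langle x_i,h\rangle=y_i,\\ \rho & \text{otherwise.}\end{cases}
\]
Consequently $\log\bigl(p_m(h^\perp)/p_m(h)\bigr) = (N_+(h^\perp)-N_+(h))\log\tfrac{1-\rho}{\rho}$, where $N_+(\cdot)$ counts agreements with the noisy responses. Let $D_m(h)$ count the rounds in which $\sgn\langle x_i,h\rangle$ and $\sgn\langle x_i,h^\perp\rangle$ disagree. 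Equation~(\ref{eq-proj}) and the independence of the noise imply that, conditionally on $D_m(h)$, the number of those disagreeing rounds where $h^\perp$ matches $y_i$ is $\mathrm{Binomial}(D_m(h),1-\rho)$, so a Chernoff/KL bound yields
\[
\Pr\!\bigl[p_m(h)\ge p_m(h^\perp)\,\big|\,D_m(h)\bigr] \le \bigl(4\rho(1-\rho)\bigr)^{D_m(h)/2}.
\]
The threshold $M=\lceil 2\log(2/\delta)/(-\log(4\rho(1-\rho)))\rceil$ is precisely what drives the right-hand side below $\delta/2$ when $D_m(h)\ge M$, so $M$ is the burn-in needed to dominate any single adversarial sector.

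The crux is then to show that, past the burn-in, every sector $S$ at angular distance greater than $\epsilon$ from $h^\perp$ accumulates disagreements $D_m(S)$ proportional to $m-M$. For this I would exploit a geometric property of the bisection rule: once $p_m$ has concentrated near $h^\perp$ (which happens after the burn-in), the bisecting diameter must pass near $h^\perp$ while its direction is essentially unconstrained, so for any fixed far-away sector a constant fraction of feasible bisection directions put it on the opposite side of $h^\perp$. A separate Hoeffding argument applied to $\log p_m(h^\perp)$ itself contributes the term $T_0$, ensuring that the advantage of the correct sector is not eroded by unlucky noise realisations at $h^\perp$. A union bound over the at most $2m$ sectors, combined with the fact that the target $\epsilon$-neighborhood has angular width of order $\epsilon$ so that only $O(1/\epsilon)$ candidate far-sectors must be ruled out, then produces the four contributions $T_0$ (Hoeffding deviation of $\log p_m(h^\perp)$ about its positive drift $\log(2(1-\rho))$), $T_1$ (the $\log(1/\epsilon)$ accuracy term coming from the $1/(8\pi\epsilon)$ candidate sectors), $T_2$ (the union bound whose sector count grows with $M$), and $T_3$ (a refined $\rho$-dependent tail correction for the binomial $B$). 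The hardest step is the geometric one: because the $x_i$ are adaptive functions of past noise, one must rule out pathological noise histories that would cause future bisecting diameters to systematically avoid separating some bad sector from $h^\perp$; the burn-in phase is inserted precisely so that the posterior is concentrated enough afterwards to force the desired geometric behaviour.
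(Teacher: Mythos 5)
Your first half is sound and matches the paper's Lemma~\ref{lem:1}: unrolling the Bayesian update into a product of likelihood factors, observing that $\log(p_m(h^\perp)/p_m(h))$ is a random walk driven by the noise on the rounds whose lines separate $h$ from $h^\perp$, and applying a Chernoff bound to get $(4\rho(1-\rho))^{D_m(h)/2}$; this is exactly how the paper disposes of any far point that is separated from $h^\perp$ by at least $M$ lines. The gap is in the complementary case, which is the heart of the theorem. You propose to show that every far sector necessarily accumulates $D_m \propto m-M$ separating lines via a geometric claim that, after the burn-in, the bisecting diameter ``passes near $h^\perp$'' and a ``constant fraction of feasible bisection directions'' separate any fixed bad sector. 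This does not hold up: $M$ depends only on $\rho$ and $\delta$, so after $M$ rounds the posterior need not be concentrated at all (and asserting it concentrates \emph{near $h^\perp$} is circular, since that is what is being proved); moreover the bisection direction is a deterministic function of the noise history, so there is no residual randomness from which to extract a ``constant fraction'' of favorable directions, and adversarial noise histories could in principle steer every subsequent diameter away from a particular bad sector. You flag this difficulty yourself but offer no mechanism to resolve it.

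The paper resolves it by a quite different device that your proposal is missing. It partitions the thin annulus between distance $\epsilon/4$ and $\epsilon/2$ from $h^\perp$ into $\t+1$ tiny sectors $A_1,\dots,A_{K}$, so that by pigeonhole some $A_j$ is cut by \emph{no} line; and if fewer than $M$ lines cut $(h^\perp,z_K)$, then at most $M-1$ lines fail to prefer that $A_j$. Lemma~\ref{lem:2} then tracks $\log p_i(A_j)$ as a random walk with positive drift $\log(2(1-\rho))$ on preferred, uncut rounds and uses the deterministic constraint $p_m(A_j)\le 1$: an uncut, almost-always-preferred region would have its posterior mass exceed $1$ unless an exponentially unlikely noise realization occurs. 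This converts the problematic geometric/adaptivity question into a pure martingale-tail statement, and it is where $T_1$ actually comes from (through the initial mass $p_0(A_j)=\frac{\epsilon}{8\pi(\t+1)}$, i.e.\ the $\log\frac{8\pi}{\epsilon}$ term), not from a union bound over $O(1/\epsilon)$ candidate sectors as you suggest. Without this (or an equivalent substitute), your argument does not close.
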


We would like to remark that when the independent flip probability $ \rho $ is $ 0 $ (i.e., in the noiseless case), the algorithm \AlgDCt reduces to the binary search.
%\begin{rem} \label{thm:noiseless}
%Consider \AlgDCt in the  absence of noise ($\rho = 0$).  
If we let $T_{\epsilon, \delta} = \lceil\log_{2}\frac{\pi}{\epsilon}\rceil$, then \AlgDCt outputs a vector that is within a distance $\epsilon$ of $h^\perp$. We present a detailed discussion with examples in \textbf{Appendix~\ref{sub:noiseless}}.

A few comments are in order: The above guarantee for \AlgDCt holds with probability one and thus the parameter $\delta$ is irrelevant in the noiseless setting. Furthermore, during each round of \AlgDCt, the distribution $p_m$  can be represented by only two numbers (the starting and ending points of the sector $R_m$), and the vector $x_m$ can be computed  efficiently (it is the orthogonal vector to the midpoint of $R_m$). Therefore,  assuming one unit of complexity for performing the queries, \AlgDCt can be implemented with complexity $O(T_{\epsilon, \delta})$, i.e., $ O(\log(1/\epsilon)) $.

\section{Dimension Coupling Based Framework} \label{sec:DC}

In Section~\ref{sec:DC2}, we devise an algorithm, called  \AlgDCt$(e_1, e_2, \epsilon, \delta)$, that takes as input two orthonormal vectors $e_1, e_2$, uses noisy responses to queries of the form $ \sgn \left\langle x, h^* \right\rangle$, and outputs with probability at least $1-\delta$ a vector $\hat{e}$ with the following three properties:
\begin{eqnarray*}
	& \hat{e} \in \span\{e_1, e_2\},
	\Vert \hat{e} \Vert =1, 
	\Vert \hat{e}  -  \frac{\left\langle h^{*},e_{1}\right\rangle e_{1}+\left\langle h^{*},e_{2}\right\rangle e_{2}}{\left\Vert \left\langle h^{*},e_{1}\right\rangle e_{1}+\left\langle h^{*},e_{2}\right\rangle e_{2}\right\Vert }   \Vert < \epsilon.  
\end{eqnarray*}
In other words, the unit vector  $\hat{e}$ is within a distance $\epsilon$ to the (normalized) projection of $h^{*}$ onto the subspace $\span\{e_1, e_2\}$. 
%In the next section, we will explain in detail how to design an optimal candidate for \AlgDCt that 
In the current section, we explain a framework \AlgDC that estimates $h^*$ using  at most $d-1$ calls to \AlgDCt (a formal description will be given in Algorithm~\ref{alg:DC} later).

Let us begin our discussion with a motivating example.
Let $ \{e_1,e_2,\ldots,e_d\} $ be an orthonormal basis of $ \mathbb{R}^d $.
Suppose that $h^*$ has the form 
$h^* = \sum_{i=1}^d c_i e_i$, 
where $\{e_i\}_{i=1}^d$ is an arbitrarily chosen orthonormal basis for 
$\mathbb{R}^d$. We assume w.l.o.g.\ that $h^*$ is normalized (i.e., $\sum_{i=1}^d c_i^2 = 1$). Our objective is then to learn the coefficients $\{c_i\}_{i=1}^d$ within a given precision by using the noisy responses to the selected sign queries. The key insight here is that this task can be partitioned in a divide-and-conquer fashion into many smaller tasks, each involving a few dimensions.  The final answer (the values of $\{c_i\}_{i=1}^d$) will then be obtained by aggregating the answers of these subproblems.  

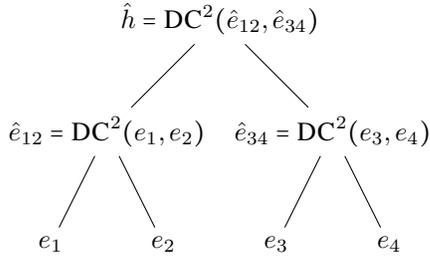
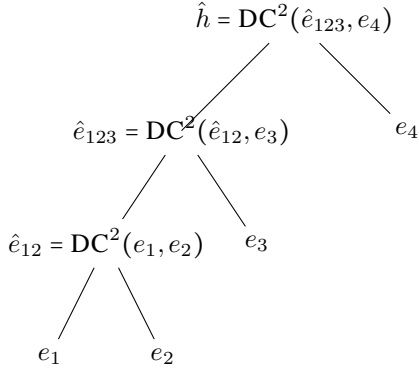
\begin{figure}[t]
	\begin{center}
		\subfloat[Scheme 1: a balanced full binary tree\label{fig:btree1}]{
			\begin{tikzpicture}[level distance=1.5cm,
			level 1/.style={sibling distance=3cm},
			level 2/.style={sibling distance=1.5cm}]
			\node {$\hat{h}=\AlgDCt(\hat{e}_{12},\hat{e}_{34})$}
			child {node {$\hat{e}_{12}=\AlgDCt(e_1,e_2)$}
				child {node {$e_1$}}
				child {node {$e_2$}}
			}
			child {node {$\hat{e}_{34}=\AlgDCt(e_3,e_4)$}
				child {node {$e_3$}}
				child {node {$e_4$}}
			};
			\end{tikzpicture}
			}
	\vspace{-3mm}
	
	\subfloat[Scheme 2: an unbalanced full binary tree\label{fig:btree2}]{
		\begin{tikzpicture}[level distance=1.5cm,
		level 1/.style={sibling distance=3cm},
		level 2/.style={sibling distance=2cm},
		level 3/.style={sibling distance=1.5cm}]
		\node {$\hat{h}=\AlgDCt(\hat{e}_{123},e_4)$}
		child {
			{node {$\hat{e}_{123}=\AlgDCt(\hat{e}_{12},e_3)$}
			child 
				{node {$\hat{e}_{12}=\AlgDCt(e_1,e_2)$}
					child {node {$e_1$}}
					child {node {$e_2$}}
				}
			child {node {$e_3$}}
			}}
		child {node {$e_4$}}
	;
		\end{tikzpicture}
	}	
		
	\end{center}
	\caption{Two possible divide-and-conquer schemes for a $ 4 $-dimensional problem. Each scheme can be represented by a full binary tree of $ 4 $ leaf nodes.}
\end{figure}

\begin{example}\label{ex:btree1}
	Assume $h^* = c_1 e_1 + c_2 e_2 + c_3 e_3 + c_4 e_4$, where $e_i$'s are the standard basis vectors for $\mathbb{R}^4$.
	Define
	\begin{equation*}
		{e}_{12} = \frac{c_1 e_1  +c_2 e_2}{\sqrt{c_1^2 + c_2^2}} \,, \, {e}_{34} = \frac{c_3 e_3 + c_4 e_4}{\sqrt{c_3^2 + c_4^2}}
	\end{equation*}
	Note here that  ${e}_{12}$ is the (normalized) orthogonal projection of $h^*$ onto $\span \{e_1, e_2\}$ and ${e}_{34}$ is the (normalized) orthogonal projection of $h^*$ onto $\span\{e_3, e_4\}$. 
	Consider the following procedure to learn $h^*$: first find out what $e_{12}$ and $e_{34}$ are, and then  use the relation
	%\begin{equation*}
	$h^* = \sqrt{c_1^2 + c_2^2}  e_{12} + \sqrt{c_3^2 + c_4^2}  e_{34}$
	%\end{equation*}
	to find $h^*$ based on the orthonormal vectors $e_{12}, e_{34}$. By this procedure, the original ``four-dimensional'' problem has been broken into three ``two-dimensional'' problems. 
	
	This procedure is illustrated in Figure~\ref{fig:btree1}. We first call $ \AlgDCt(e_1,e_2) $ to obtain an estimate $ \hat{e}_{12} $ for $ e_{12} $; then we call $ \AlgDCt(e_3,e_4) $ to obtain an estimate $ \hat{e}_{34} $ for $ e_{34} $; finally we call $ \AlgDCt(\hat{e}_{12},\hat{e}_{34}) $ to obtain an estimate $ \hat{h} $ for $ h^* $.

\end{example}

\begin{example}\label{ex:btree2}
	For another example of the $ 4 $-dimensional problem discussed in Example~\ref{ex:btree1}, let us consider another scheme illustrated in Figure~\ref{fig:btree2}: We call $ \AlgDCt(e_1,e_2) $ and obtain $ \hat{e_{12}} $ as an estimate for $ e_{12} $; then call $ \AlgDCt(\hat{e}_{12},e_3) $ and obtain $ \hat{e}_{123} $ that estimates the normalized orthogonal projection $ h^* $ onto $ \span\{e_1,e_2,e_3\} $; finally call $ \AlgDCt(\hat{e}_{123},e_4) $ and obtain an estimate for $h^*  $ which we denote by $ \hat{h} $.
\end{example}

Examples~\ref{ex:btree1} and \ref{ex:btree2} show two possibilities of divide-and-conquer schemes for a $ 4 $-dimensional problem. In fact, each scheme corresponds to a full binary tree of $ 4 $ leaf nodes.

For general $d$, the idea is similar: We break the problem into at most $d-1$ ``two-dimensional'' problems that each can be solved efficiently. Again, each divide-and-conquer scheme corresponds to a full binary tree of $ d $ leaf nodes.

Consider the decomposition
$h^* = \sum_{i=1}^d c_i e_i$.  
%For simplicity assume that $d$ is an even number.
Without loss of generality, suppose that the first two leaf nodes to be combined are $ e_1 $ and $ e_2 $.
 We can write  
%\vspace{-.3cm}
\begin{eqnarray} \label{h-decompose-1}
	h^*  = \sum_{i = 1}^d c_i e_i =  \hat{c}_{12}   \frac{ c_{1} e_{1} +c_{2} e_{2} }{ \sqrt{c_{1}^2 +c_{2}^2 }} + \sum_{i=3}^{d} c_i e_i, %\vspace{-.6cm}
\end{eqnarray}
where in the last step we have taken $\hat{c}_{12} \triangleq \sqrt{c_{1}^2 +c_{2}^2 }$. Now, note that $ \frac{ c_{1} e_{1} +c_{2} e_{2} }{ \sqrt{c_{1}^2 +c_{2}^2 }}$ is  the normalized orthogonal projection of $h^*$ onto $\span\{e_{1}, e_{2}\}$. Hence, by using \AlgDCt$(e_{1}, e_{2}, \epsilon, \delta)$ we can obtain, with probability at least $1-\delta$, a good approximation  $\hat{e}_{12}$ (within a distance $\epsilon$)  of this projection.
Therefore, for small enough $\epsilon$ we have
%\vspace{-.1cm}
%\begin{equation} \label{h-decompose-2}
%\vspace{-.2cm}
$h^* \approx  \hat{c}_{12} \hat{e}_{12}+\sum_{i=3}^{d} c_i e_i.$
%\end{equation} 
Since $h^*$ is now expressed (approximately) in terms of  $d-1$  orthonormal vectors $\{\hat{e}_{12},e_3,e_4,\ldots,e_d\}$, we have effectively reduced the dimensionality of problem from $d$ to $d-1$. The idea is then to repeat the same procedure as in  \eqref{h-decompose-1} to the newly obtained representation of $h^*$. Hence, by repeating this procedure  $d-1$ times in total we will reach a  vector which is  the final approximation of $h^*$. 

We present this general method in Algorithm~\ref{alg:DC}.

%\vspace{-.1 cm}

% The total number of calls for DC2 in Algorithm~\ref{alg:DC} is at most $\sum_{j=1}^{d-1} \lceil \frac{d}{2^j} \rceil$ which is less than $d-1$.  
%   
%Note that $h^*$ belongs to the $\frac{d}{2}$-dimensional space $\span{\hat{e}_{1}, \cdots, \hat{e}_{\frac{d}{2}}}$. Thus, if we new exactly what the vectors $\hat{e}_j$ were, then we could have  
%If the estimation error bound for the subroutine is $\rho,$ then
%by applying the triangle inequality, the error bound for the dimension
%coupling method operating in the $d$-dimension case will be $d\rho$;
%i.e., if our desired estimation error bound is $\rho'$ for the $d$-dimensional
%case, it suffices to set the target estimation error bound for the
%two-dimensional subroutine to $\rho'/d.$

\begin{algorithm}[tbh]
	\begin{algorithmic}[1]
		\REQUIRE an orthonormal basis $E=\{e_{1},e_{2},\ldots,e_{d}\}$ of $\mathbb{R}^{d}$. 
		\ENSURE a unit vector $ \hat{h} $ which is an estimate for $ h^* $.
		\FOR{$j\gets 1$ \TO $d-1$}
		\STATE Replace any two vectors $ e' $ and $ e'' $ in $E$ with the vector   \AlgDCt$(e',e'', \epsilon, \delta)$. 
		\ENDFOR
		\STATE Let $ \hat{h} $ be the only remaining vector in $ E $.
		\RETURN $ \hat{h} $
	\end{algorithmic}

%	while    $d \geq 2$
%	\begin{enumerate}
%		\item for $j=1$ to $\lfloor d/2 \rfloor$  
%		\item[ ]     Replace the two vectors $e_{2j-1}, e_{2j}$ in $E$ with the vector   \AlgDCt$(e_{2j-1}, e_{2j}, \epsilon, \delta)$.                   
%		\item[ ] end for
%		\item Set $d \gets \lceil d/2 \rceil$ . 
%	\end{enumerate}
%	end while \\
%	return  \AlgDCt$(e_1, e_2, \epsilon, \delta)$.
	
	%for $j=1$ to $K'$
	%
	%{*} Call the two-dimensional subroutine with input $\mathcal{B}_{\{2j-1,2j\}}=\{b_{2j-1},b_{2j}\}$
	%and obtain the output $a_{\{2j-1,2j\}}.$
	%
	%{*} $a_{\{2j-1,2j\}}\gets a_{\{2j-1,2j\}}/\left\Vert a_{\{2j-1,2j\}}\right\Vert .$
	%
	%{*} $\mathcal{A}\gets\mathcal{A}\cup\{a_{\{2j-1,2j\}}\}.$
	%
	%endfor
	%
	%if $K$ is odd
	%
	%{*} $a_{\{K\}}\gets b_{K}.$
	%
	%{*} $\mathcal{A\gets\mathcal{A}}\cup\{a_{\{K\}}\}.$
	%
	%endif
	%
	%if $K=2$
	%
	%{*} Call the two-dimensional subroutine and return the output from
	%the subroutine.
	%
	%endif
	%
	%{*} Call Dimension Coupling with input $\mathcal{A}$ and return the
	%output.
	
	\protect\caption{Dimension Coupling (DC)\label{alg:DC}}
\end{algorithm}

\begin{thm} \label{thm:DC}
	\emph{\textbf{(Proof in Appendix~\ref{sub:DC})}} For  \AlgDC   (outlined in Algorithm~\ref{alg:DC}) and any of its divide-and-conquer scheme represented by a full binary tree, we have: 
	\begin{enumerate}
		\item \AlgDC will call the two-dimensional
		subroutine \AlgDCt  $d-1$ times.
		\item Provided that the output of \AlgDCt is with probability $1-\delta$ within distance $\epsilon$ of the true value and $\epsilon \leq 5/18$, \AlgDC
		ensures an estimation error of at most $5 \epsilon (d-1)$ with probability at least $1-\delta (d-1)$. 
	\end{enumerate}
\end{thm}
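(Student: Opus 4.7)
The first part of the theorem is an immediate consequence of the loop structure in Algorithm~\ref{alg:DC}: each of the $d-1$ iterations invokes \AlgDCt exactly once and decreases $|E|$ by one (two vectors are removed and one is added). Since $|E|$ starts at $d$ and ends at $1$, exactly $d-1$ calls are made.

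For the second part, I first convert the probabilistic statement into a deterministic one via a union bound: with probability at least $1-\delta(d-1)$, every one of the $d-1$ \AlgDCt invocations returns an $\epsilon$-accurate estimate of its target (the normalized projection of $h^*$ onto the two-dimensional span provided as input). I condition on this good event and proceed deterministically.

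The heart of the argument is a telescoping estimate along the decreasing chain of subspaces $V_0=\R^d\supset V_1\supset\cdots\supset V_{d-1}=\span\{\hat{h}\}$, where $V_j=\span(E_j)$ is the span of $E$ after the $j$-th iteration. The key per-step bound I would prove is $\|P_{V_{j-1}}(h^*)-P_{V_j}(h^*)\|\le 2\epsilon$. To see it, write $V_{j-1}=W\oplus\span\{e',e''\}$, where $e',e''$ are the pair replaced in iteration $j$ and $W=\span(E_{j-1}\setminus\{e',e''\})$. Since $\hat{e}_j\in\span\{e',e''\}$ is orthogonal to $W$, we have $V_j=W\oplus\span\{\hat{e}_j\}$, so the projection difference collapses to $P_{\span\{e',e''\}}(h^*)-P_{\span\{\hat{e}_j\}}(h^*)=\|P_{\span\{e',e''\}}(h^*)\|\,\tilde{e}_j-\langle h^*,\hat{e}_j\rangle\hat{e}_j$, where $\tilde{e}_j$ is the true normalized projection targeted by the $j$-th \AlgDCt call. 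Using $\|h^*\|=1$ and $\|\tilde{e}_j-\hat{e}_j\|\le\epsilon$, a short algebraic expansion yields the $2\epsilon$ bound. Summing the telescoping chain gives $\|h^*-P_{V_{d-1}}(h^*)\|\le 2(d-1)\epsilon$.

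The last step converts this projected-distance bound into a bound on $\|\hat{h}-h^*\|$. Writing $c=\langle h^*,\hat{h}\rangle$, we have $P_{V_{d-1}}(h^*)=c\hat{h}$, and the identities $\|h^*-c\hat{h}\|^2=1-c^2$ and $\|\hat{h}-h^*\|^2=2(1-c)$ combine into $\|\hat{h}-h^*\|^2=2\|h^*-c\hat{h}\|^2/(1+c)$. I anticipate the main obstacle to be controlling the sign and magnitude of $c$: the last \AlgDCt call's target $\tilde{e}_{d-1}$ satisfies $\langle h^*,\tilde{e}_{d-1}\rangle=\|P_{V_{d-2}}(h^*)\|\ge 0$, giving $c\ge\|P_{V_{d-2}}(h^*)\|-\epsilon$; combining this with the partial-telescoping estimate $\|P_{V_{d-2}}(h^*)\|\ge 1-2(d-2)\epsilon$ and the hypothesis $\epsilon\le 5/18$ keeps $c$ strictly positive and bounded away from $-1$ in the regime where the advertised bound $5\epsilon(d-1)$ is nontrivial (i.e., at most $2$), and a careful accounting of the resulting constants then delivers $\|\hat{h}-h^*\|\le 5\epsilon(d-1)$.
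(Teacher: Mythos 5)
Your proposal is correct, but it takes a genuinely different route from the paper's. The paper argues by induction on the size of the orthonormal set and tracks \emph{normalized} projections: after one replacement it shows the new normalized target $h'$ satisfies $\|h'-h^\perp\|\le 5\epsilon$ (a $2\epsilon$ contribution from the change in the normalizing factor $\beta$ plus a $3\epsilon$ contribution from the unnormalized shift divided by $\beta$, which is where $\epsilon\le 5/18$ enters), and then sums these drifts over the $d-1$ calls by the triangle inequality. You instead telescope the \emph{unnormalized} orthogonal projections $P_{V_j}(h^*)$ along the nested chain of subspaces; since $\hat{e}_j\in\span\{e',e''\}$, the per-step difference equals $a\bigl(\tilde{e}_j-\langle\tilde{e}_j,\hat{e}_j\rangle\hat{e}_j\bigr)$ with $a=\|P_{\span\{e',e''\}}(h^*)\|\le 1$, whose norm is at most $\|\tilde{e}_j-\hat{e}_j\|\le\epsilon$, so your claimed $2\epsilon$ per step is in fact generous and no renormalization penalty accrues along the way. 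The price is paid once at the end: converting $\|h^*-c\hat{h}\|\le 2(d-1)\epsilon$ into a bound on $\|\hat{h}-h^*\|$ via $\|\hat{h}-h^*\|^2=2\|h^*-c\hat{h}\|^2/(1+c)$ needs $c$ bounded below, and your partial telescoping gives $c\ge 1-(2d-3)\epsilon\ge 1/5$ whenever $5\epsilon(d-1)\le 2$ (the claim being vacuous otherwise, as two unit vectors are at distance at most $2$); this yields roughly $2.6\,\epsilon(d-1)$, comfortably inside the advertised $5\epsilon(d-1)$. What each approach buys: yours gives a sharper constant and localizes the use of the smallness of $\epsilon$ to a single final normalization, while the paper's gives a self-contained per-call contract ($5\epsilon$ drift of the normalized target per replacement) that composes over any full binary tree without a trivial/nontrivial case split. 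Both arguments cover arbitrary aggregation schemes and both share the same implicit nondegeneracy assumption that no intermediate projection of $h^*$ vanishes.
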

%Due to space limitations, we {\textbf{defer the proof of all theorems to the longer version \cite{link-dropbox}}}.  
%We defer the proof of all theorems to the appendix. 

As a result of Theorem~\ref{thm:DC}, if we desire the framework \AlgDC to estimate $h^*$ within distance $\tilde{\epsilon}$ and with probability at least $1-\tilde{\delta}$, then it is enough to fix the corresponding parameters of \AlgDCt to $\epsilon = \frac{\tilde{\epsilon}}{5(d-1)}$ and $\delta = \frac{\tilde{\delta}}{d-1}$.

Theorem~\ref{thm:DC} indicates that \AlgDC requires
${O}(d (\log \frac{1}{\epsilon} + \log \frac{1}{\delta} ))$ queries, since each call to \AlgDCt needs $ O(\log \frac{1}{\epsilon} + \log \frac{1}{\delta} ) $ queries. Recall that the computational complexity of \AlgDCt is  ${O}( (\log \frac{1}{\epsilon} + \log \frac{1}{\delta} )^2)$. Hence, \AlgDC has computational complexity 
${O}(d (\log \frac{1}{\epsilon} + \log \frac{1}{\delta} )^2)$. As a special case, if in absence of noise, both the query complexity and time complexity of \AlgDC are $ O(d \log \frac{1}{\epsilon}) $.

\section{Empirical Results}
\label{sec:expr}
\begin{figure*}[ht!]
\vspace{-5mm}
	\subfloat[Noiseless ($d=25$)\label{fig:nf-25d}]{\includegraphics[width=0.33\textwidth]{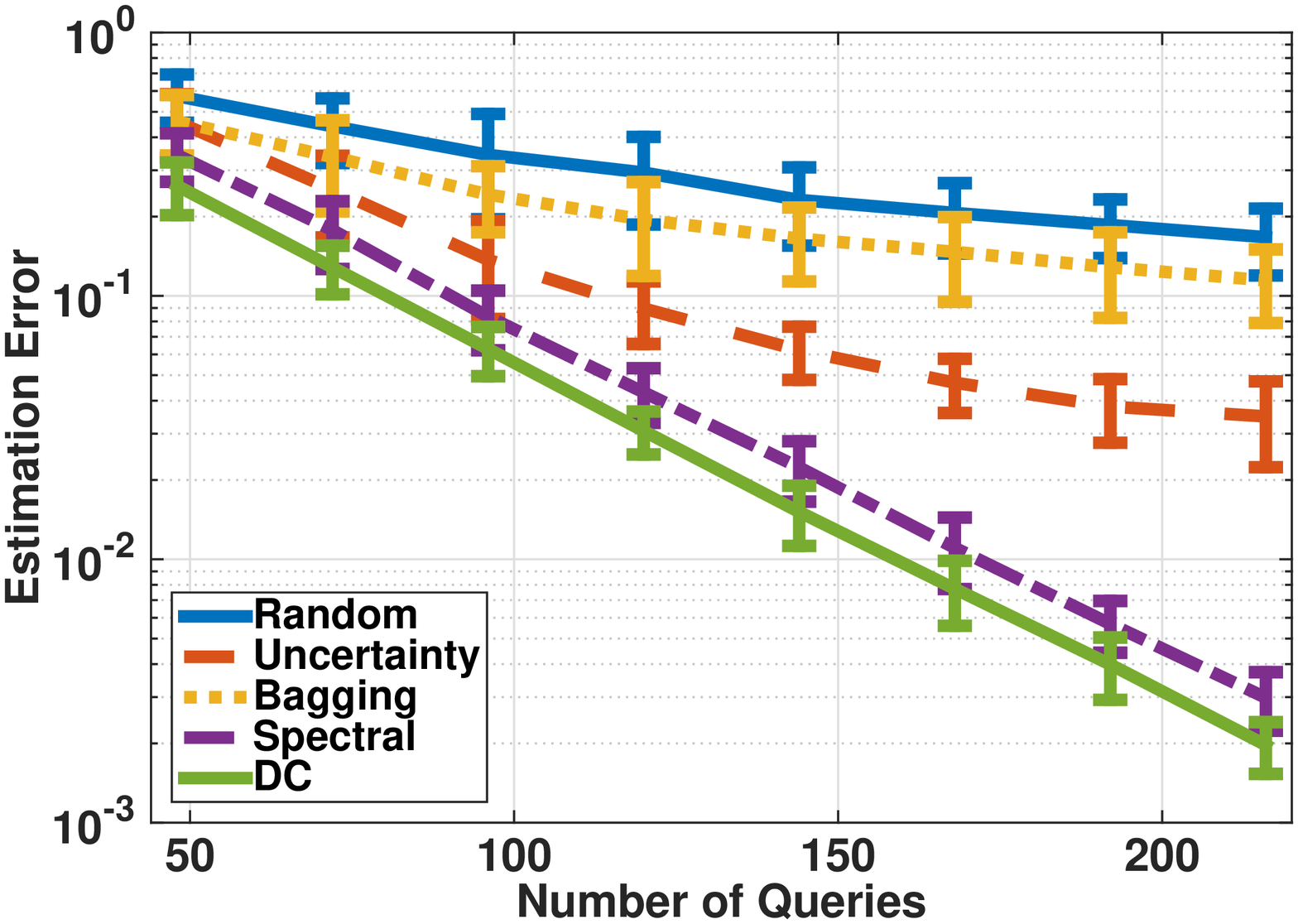}}\subfloat[Noiseless ($d=50$)\label{fig:nf-50d}]{\includegraphics[width=0.33\textwidth]{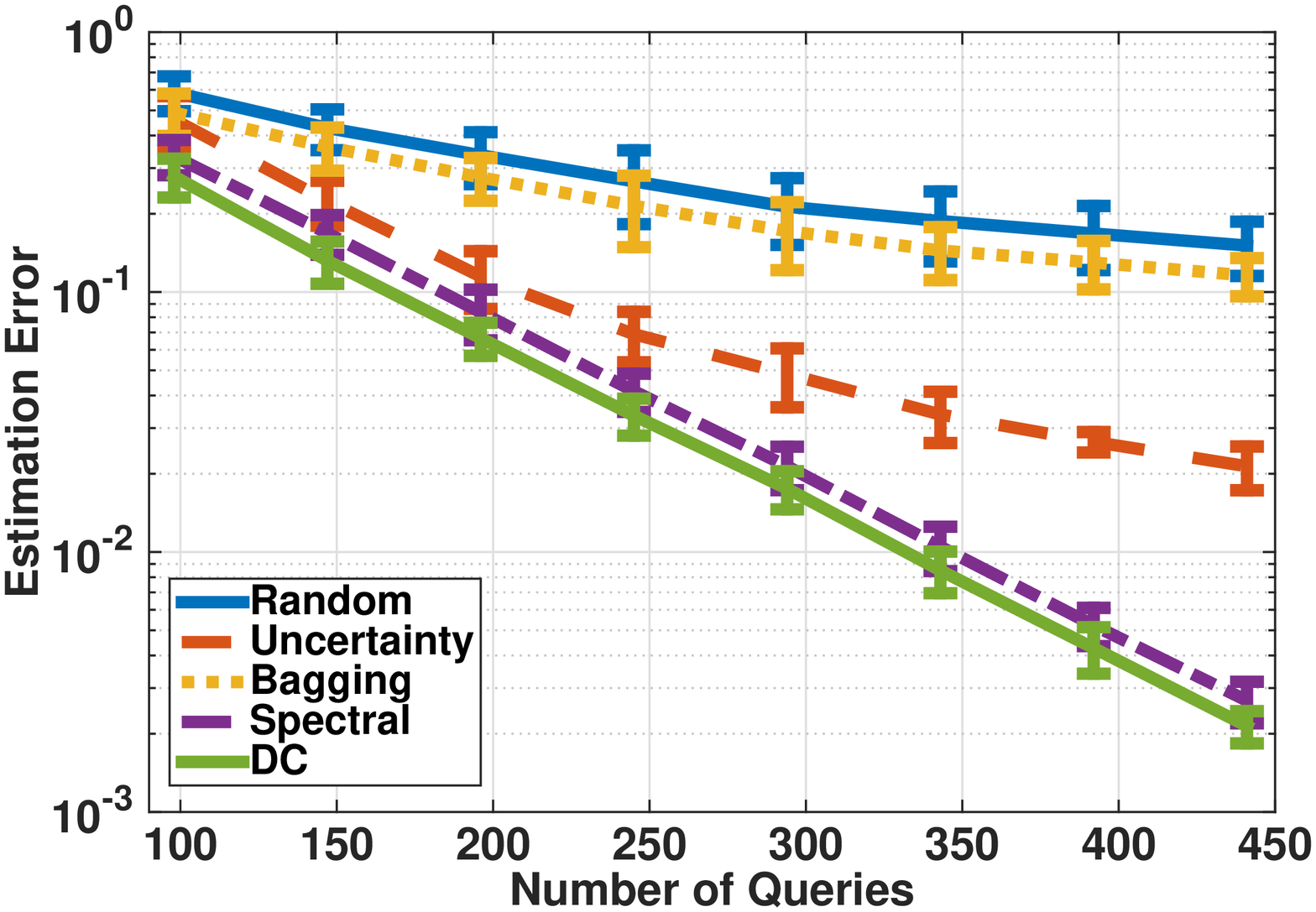}
		
	}
%	\subfloat[Noise-free ($d=100$)\label{fig:nf-100d}]{\includegraphics[width=0.33\textwidth]{}
%	
%}
%\vspace{-4.5mm}
\subfloat[Execution time (noiseless)\label{fig:nf-time}]{\includegraphics[width=0.33\textwidth]{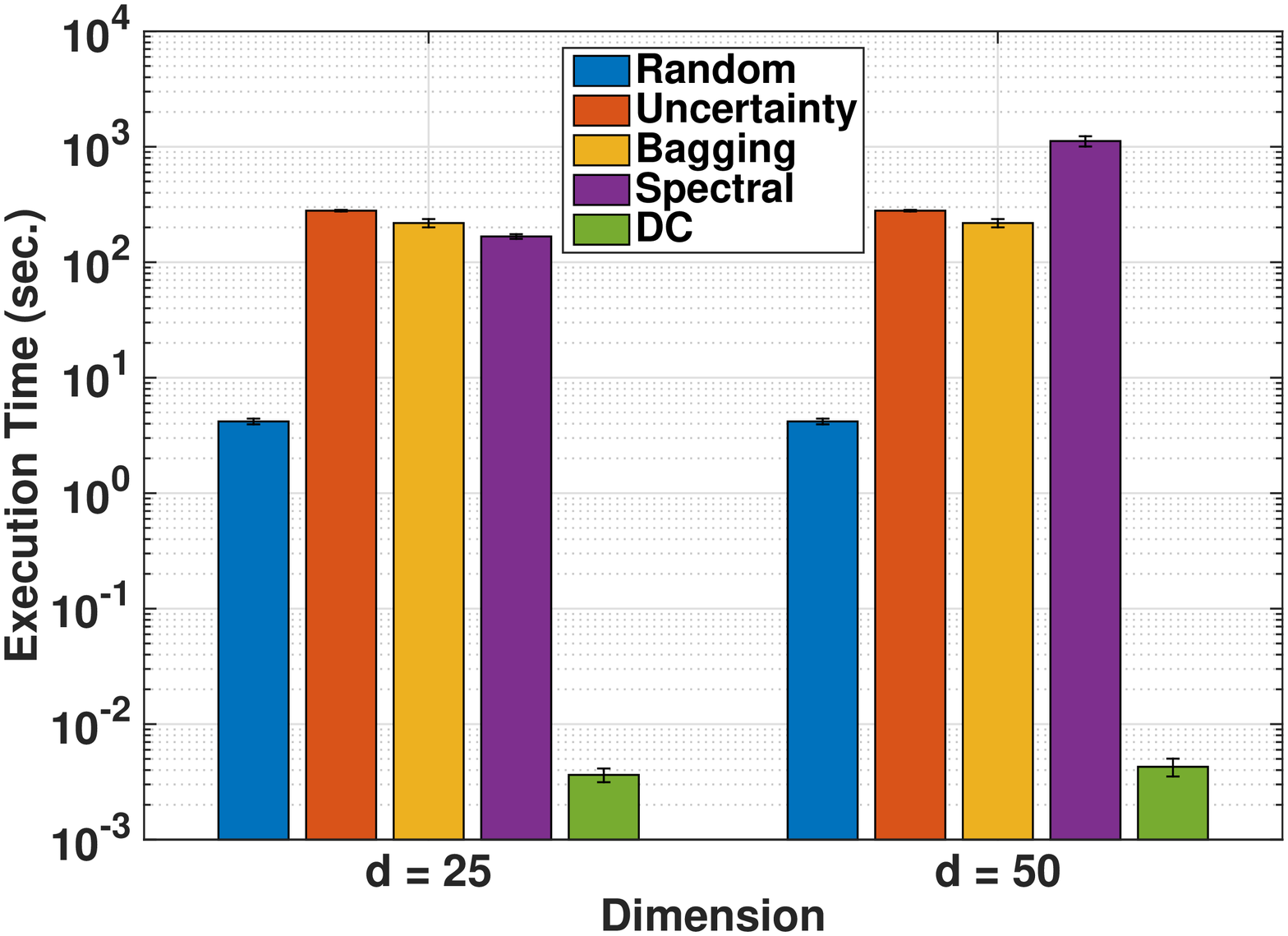}

}
\vspace{-4.5mm}

\subfloat[Noisy ($d=25$)\label{fig:noisy-25}]{\includegraphics[width=0.33\textwidth]{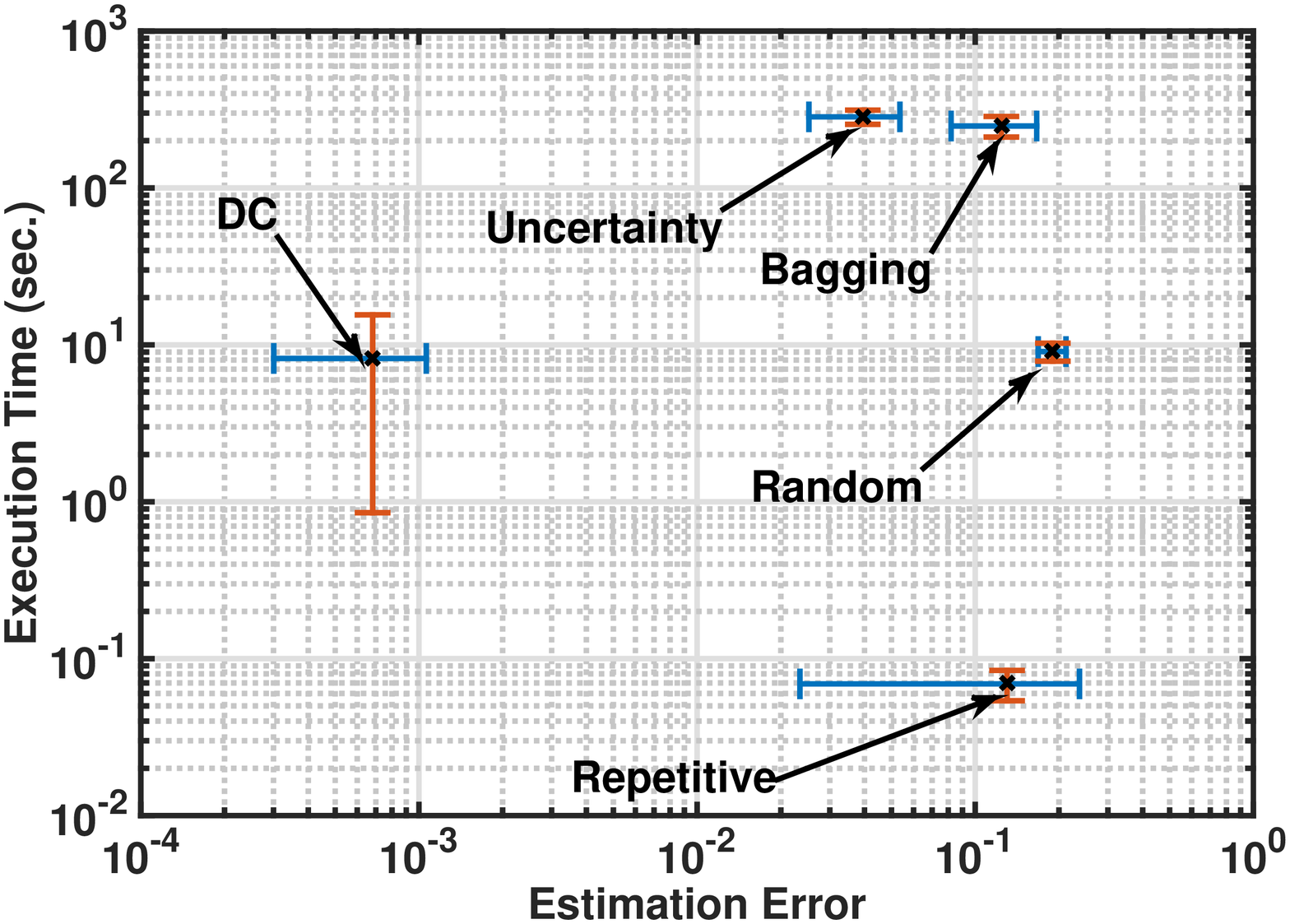}

}\subfloat[Noisy ($d=50$)\label{fig:noisy-50}]{\includegraphics[width=0.33\textwidth]{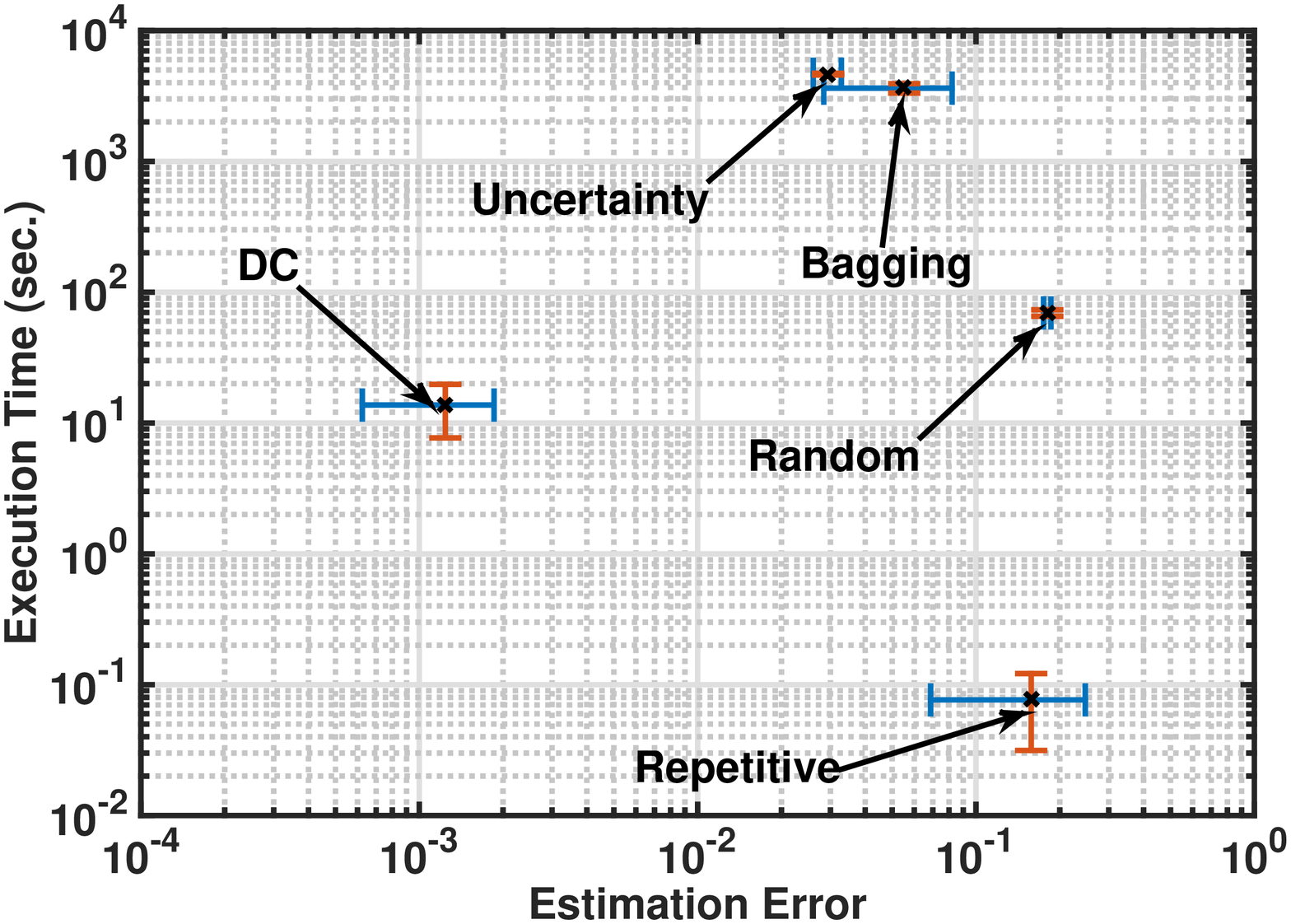}

}
%\vspace{-4.5mm}
%\hspace*{\fill}
%\subfloat[Execution time (noisy)\label{fig:noisy-time}]{\includegraphics[width=0.33\textwidth]{}
%	
%}
%\hspace*{0.05\textwidth}
\subfloat[Noisy ($d=1000$)\label{fig:noisy-high}]{\includegraphics[width=0.33\textwidth]{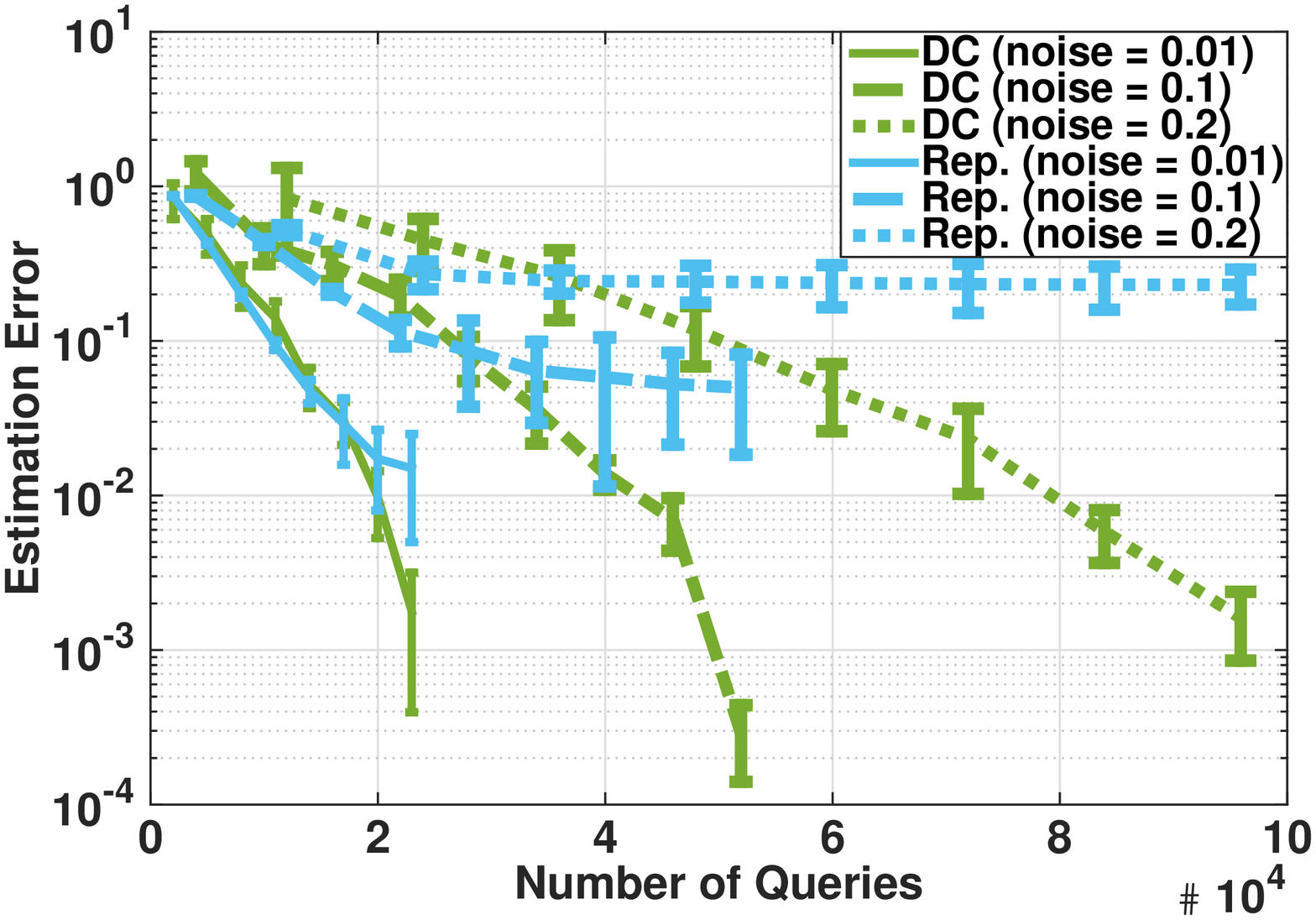}

}
%\hspace*{\fill}
%\subfloat[Agnostic ($d=25$)\label{fig:agnostic}]{\includegraphics[width=0.33\textwidth]{}
%
%}
\vspace{-2mm}
\protect\caption{Figures~\ref{fig:nf-25d} and \ref{fig:nf-50d}
%	, and~\ref{fig:nf-100d}
show the estimation error in the noiseless setting as we increase the number
of  queries, for $d= 25$ and $100$, respectively. Figure~\ref{fig:nf-time} shows the corresponding execution times.  Figure~\ref{fig:noisy-25} and \ref{fig:noisy-50} show the scatter plots of the execution time and the estimation error of different methods
%as we increase the number of 
%queries
 for $d=25,50$ and the noise level
$\rho=0.1$. We allow each algorithm to use a budget of $ 800 $ and $ 1800 $ queries in Figure~\ref{fig:noisy-25} and \ref{fig:noisy-50}, respectively.
%The corresponding execution times are shown in Fig.~\ref{fig:noisy-time}. 
Figure~\ref{fig:noisy-high} presents the  estimation error of \AlgDC and \AlgRC as we increase the number of  queries for $d=1000$ and  noise levels $\rho = 0.01, 0.1, 0.2$. 
%Finally, 
%Fig.~\ref{fig:agnostic} demonstrates the  error probability  w.r.t. the number of queries in the agnostic setting
% for $d=25$.
 }
 \vspace{-.2cm}
\end{figure*}
In this section, we extensively evaluate the performance of
 \AlgDC against the following baselines:
%In our extensive experiments, we compare the proposed algorithm against
%the following state-of-the-art methods:
%\begin{itemize}
 
 \AlgRS:  Queries are generated by sampling  uniformly at random from the unit sphere $S^{d-1}$.
 
 \AlgUS:  Queries are sampled uniformly at random from the orthogonal complement of $w$, where $w$ is the vector learned by linear SVM. 
 
 \AlgQB: The bag size is set to $20$ and  $1000$ queries are generated at each iteration. The query   with the largest disagreement is picked \cite{mamitsuka1998query}.
%	 This method leverages the \emph{bagging} technique to implement the QBC algorithm \cite{mamitsuka1998query}. Here the bag size is set to $20$.
%	We generate $1000$ queries at each iteration and select the query with the largest disagreement among the $20$ hypotheses.
 
 \AlgSP: The version space is approximated by the largest ellipsoid consistent with all previous query-label pairs. Then, at each iteration a query is selected to approximately halve the ellipsoid \cite{AAAI}. 
 
 \AlgRC:  In the noisy setting, one easy way to apply \AlgDC is   to query each   point  $R$ times and use the majority rule to determine its label; i.e., the combination of repetitive querying (Section~\ref{sec:DC2}) and the \AlgDC framework (Section~\ref{sec:DC}).
%\end{itemize}
%	 
%	 This approach approximates the version space by the
%	largest ellipsoid consistent with all previous query-response pairs \cite{AAAI}, which can be formulated as a convex optimation problem. It generates the
%	query according to the top eigenvector of a certain positive semidefinite
%	matrix closely related to the ellipsoid.
%	

%	This method
%	functions similarly to the noise-free dimension coupling method. The difference is that to be noise resistant, when it wants to query a point, it  repeatedly queries the point for $R$ times and  conducts
%	a majority vote to determine the true response at this point. 
Our metrics to compare different algorithms are: a) estimation error, b) query complexity, and c) execution time. In particular, as we increase the number of queries we measure the average estimation errors  and execution times for all the baselines (with $90\%$ confidence intervals). By nature, in active learning via query synthesis, all  data points and queries are generated synthetically. For all the baselines, we used the fastest available implementations in MATLAB. 
%In the noise-free and noisy settings, we introduce two metrics for
%evaluating the algorithms: a) estimation error,which is the distance between the estimated vector proposed by algorithms and the true vector; and b) execution time, which quantifies the time efficiency of an algorithm. 
%\vspace{-0.1cm}

\textbf{Noiseless setting:} 
%We compare the performance of \AlgDC  against
%\AlgRS, \AlgUS, \AlgQB and \AlgSP. 
 Figures~\ref{fig:nf-25d} and \ref{fig:nf-50d}
%  and ~\ref{fig:nf-100d}
   (with dimension $d= 25$ and $ 50 $, respectively)  show that in terms of estimation error, \AlgDC outperforms all other baselines, and significantly outperforms \AlgRS, \AlgUS and  \AlgQB. Note that the estimation errors are plotted in log-scales. In terms of execution times, we see in Fig.~\ref{fig:nf-time} that \AlgDC runs  three orders of magnitude
faster than other baselines. Training an SVM at each iteration for  \AlgRS, \AlgUS and \AlgQB comes with a huge computational cost. Similarly, \AlgSP requires solving a convex optimization problem at each iteration;  thus its performance drastically deteriorates as the dimension increases, which makes it infeasible  for many practical problems. 

%Given
%a certain budget of queries and with varied dimensions, we measure their estimation errors   and report
%the execution times.
%
%\textbf{Estimation error}. 
%
%
%As shown in Fig. \ref{fig:nf-25d} ($d=25$)
%, \ref{fig:nf-50d} ($d=50$) and \ref{fig:nf-100d} ($d=100$) with $90\%$ confidence intervals, \AlgDC outperforms all other baselines.
%It is noteworthy that Fig. \ref{fig:nf-25d} and \ref{fig:nf-50d} are plotted
%on log scale, thus \AlgDC maintains
%a constant factor lead over \AlgSP. When the dimension is 100, as noted in Fig.~\ref{fig:nf-100d}, \AlgDC observably surpasses all other methods.
%
%\textbf{Execution time}. As plotted on \emph{log scale} in Fig.~\ref{fig:nf-time} with $90\%$ confidence intervals, \AlgDC runs 
%faster by more than three orders of magnitude than any other baselines in all three dimensions. \AlgRS, \AlgUS and \AlgQB all demand SVM training at
%each iteration, which incurs a remarkable overhead. \AlgSP
%requires solving a convex optimization problem at each iteration;  thus its performance
%drastically deteriorates as the dimension increases, which makes it infeasible for practical problems,  where high dimensions
%(up to $1\times10^{6}$) frequently
%arise, even if noise-free.

%\vspace{-0.1cm}

\textbf{Noisy setting:} We set the  noise level to $\ep=0.1$ and compare the performance of \AlgDC against \AlgRS, \AlgUS, \AlgQB, and \AlgRC.
% (for $R=5$).
 As mentioned in \cite{AAAI}, and we have also observed in our experiments, \AlgSP does not work even for small amounts of noise as it incorrectly shrinks the version space and misses the true linear separator; therefore it is excluded here. We see again in Figures \ref{fig:noisy-25} and \ref{fig:noisy-50}   (for $d=25 $ and $50$) that \AlgDC significantly outperforms all other methods in terms of estimation error. More precisely, using the same number of queries, the estimation error of \AlgDC is around two orders of magnitude smaller than other baselines.
% as we increase the number of queries.
%   Figure \ref{fig:noisy-time} shows
We can also observe from these two figures 
    that \AlgDC still runs around 100 times faster than \AlgRS, \AlgUS, and  \AlgQB. Clearly, \AlgDC has a higher computational cost than \AlgRC, as  \AlgDC performs a Bayesian update after each query. Finally, as we increase the dimension to $d=1000$,    \AlgRS, \AlgUS, and \AlgQB become significantly slower. Hence, in   Figure~\ref{fig:noisy-high} we only show how the estimation error (for noise levels $\rho=0.01, 0.1, 0.2$) decreases for \AlgDC and \AlgRC with more queries. It can be observed from Figure~\ref{fig:noisy-high} that consuming the same number of queries, \AlgDC can achieve an estimation error from one order (when the noise intensity is very small) to three orders of magnitude (when the noise intensity is $ 0.2 $) smaller than that of \AlgRC.
\section{Related Work}
\label{sec:related}

The sample complexity of learning a hypothesis was traditionally studied in the context of probably approximately correct (PAC) learning \cite{valiant1984theory}. In PAC learning theory, one assumes that a  set of hypotheses $\mathcal{H}$ along with  a set of unlabeled data points $\mathcal{X}$ are given, where each data point $x\in \mathcal{X}$ is drawn i.i.d.\ from some  distribution $D$.
% with a continuous probability density function $f_D$.
Classical PAC bounds then yield the sample complexity  (i.e., the number of required i.i.d.\ examples)  from $D$ to output a hypothesis $h\in \mathcal{H}$ that will have  \textit{estimation error} at most $\epsilon$ with probability at least $1-\delta$, for some fixed $\epsilon,\delta>0$. Here, the estimation error is defined as $\epsilon = \Pr_{x\sim D}[h(x)\neq \h(x)]
%=\int 1\{h(x)\neq h^*(x)\} f_D(x) dx\leq 2M_D \arcsin (\Vert h-h^*\Vert/2)   
$, where $\h$ is the unknown true hypothesis.
%, $M_D=\max_x f_D(x)$, and $\Vert h-h^*\Vert$ is the distance between the unit normal vectors of the hypotheses (halfspaces) $h$ and $h^*$ (since in halfspace learning, each hypothesis is a halfspace and thus we represent it by its unit normal vector). To make sure the estimation error $\err(h)$ is bounded by $\epsilon$, it suffices to ensure $\Vert h-h^*\Vert < \epsilon_D$, where $\epsilon_D = 2\sin[\epsilon/(2M_D)]$. Therefore, hereinafter we only discuss how to guarantee $\Vert h-h^*\Vert < \epsilon_D$ with high probability and simply denote $\epsilon_D$ by $\epsilon$ for the ease of notation. 
%We would like to emphasize that 
%\begin{itemize}
%\item The distribution $D$  characterizes the empirical occurrence 
%frequency of data points in practice and is only used for performance evaluation of the algorithms.
%\item Algorithms can query the label of \emph{any point} irrespective of any distribution;
%%\item The queries are decided and made by the algorithms and thus irrelevant to the distribution $D$ that, in turn, 
%\end{itemize}
%\lc{we need a transition here} 
In the \textit{realizable} case of learning a  halfspace, i.e., when $\h\in\R^d$  perfectly separates the data points into  positive and negative labels, it is known that with $\tilde{O}(d/\epsilon)$\footnote{ {We use the $\tilde{O}$ notation to ignore terms that are logarithmic or dependent on $\delta$.}} i.i.d.\ samples one can find a linear separator with an estimation error $\epsilon$. The main advantage of using active learning methods, i.e., sequentially querying data points, is to reduce the sample complexity exponential fast, ideally to $\tilde{O}(d\log(1/\epsilon))$. In fact, a simple counting argument based on sphere packing shows that any algorithm needs $\Omega(d\log(1/\epsilon))$ examples to achieve an estimation error of $\epsilon$ \cite{dasgupta2009analysis}. 

For $d=2$ and when the  distribution is uniform over the unit sphere $S^1$ it is very easy to see that the halving or bisection leads to $\tilde{O}(\log(1/\epsilon))$. By using the same halving method, one can in principle extend the result to any dimension $d$. To do so, we need to carefully construct the version space (i.e., the set of hypotheses consistent with the queries and outcomes) at each iteration and then find a query that halves the volume (in the uniform case) or the density (in the general case if the distribution is known) \cite{dasgupta2004analysis}. Finding such a query in high dimension is  very challenging. 

One very successful approach that does not suffer from the aforementioned computational challenge is pool-based active learning  \cite{settles2010active}, where instead of ideally halving the space, effective approaximations are performed. Notable algorithms are \textit{uncertainty sampling} \cite{lewis1994sequential} and   query-by-committee (QBC) \cite{freund1997selective}.
In fact, our problem is closely related to learning  homogeneous linear separators under the uniform distribution in the pool-based setting. This problem is very well understood and there exist efficient pool-based  algorithms \cite{balcan2007margin,dasgupta2005perceptron,dasgupta2008hierarchical}. In particular, Dasgupta et al.~\cite{dasgupta2009analysis} presented an efficient perceptron-based algorithm  that achieve a near-optimal query complexity. Similar results can be obtained under log-concave distributions \cite{balcan2013active}.  Most of the pool-based methods require to have access to $\tilde{O}(1/\epsilon)$ number of unlabeled samples in each iteration or otherwise they perform very poorly \cite{balcan2007margin,dasgupta2009analysis}. This means that in order to have exponential guarantee in terms of sample complexity, we need to grow the pool size exponentially fast (note that there is no need to store all of these points). Moreover, with a few exceptions \cite{awasthi2014power,balcan2006agnostic} pool-based  learning of linear separators  in the noisy setting has been  much less studied and the dependency of sample complexity on noise is not very well understood. 
An attractive  alternative to the pool-based framework is  query synthesis where we have access to  membership queries~\cite{angluin1988queries}): a learner can  request  for any unlabeled data instance from the input space,  including queries that the learner synthesizes  from scratch. This way  the pool size limitation is entirely eliminated.  In many  recent applications, ranging from automated science \cite{king2009automation}, to robotics \cite{cohn1996active}, and  to adversarial reverse engineering \cite{lowd2005adversarial}, query synthesis is the appropriate  model. 
%For instance, when a robot tries to predict the absolute coordinates of its hand (given the joint angles of its mechanical arm) it can choose which joint configuration to test next and then execute a sequence of movements to reach that configuration.
For instance, in security-sensitive applications (e.g., spam filters and intrusion detection systems) that routinely use machine learning tools,  a growing concern is the ability of adversarial attacks to identify the blind spots of the learning algorithms.  Concretely,  classifiers are commonly deployed  to detect miscreant activities. However, they are  attacked by adversaries who generate exploratory  queries   to elicit information that in return allows them to evade detection  \cite{nelson2012query}. In this work, we show how an adversary can use active learning methods by making synthetically \textit{de novo} queries and thus identify the linear separator used for classification.  We should emphasize that in active learning via \textit{synthesized} queries  the learning algorithm can  query the label of \emph{any points}  in order to  explore the hypothesis space.  In the noiseless setting (if we ignore the dependency of the pool size  on  $\tilde{O}(\log(1/\epsilon))$), one can potentially use the pool-based algorithms (under the uniform distribution). Our main contribution in this paper is to develop a noise resilient active learning algorithm that has access to \textit{noisy} membership queries. To the best of our knowledge, we are the first to show a near optimal algorithm that outperforms in theory and practice the naive repetition mechanism and the recent spectral heuristic methods \cite{AAAI}. 
	\clearpage

	\bibliographystyle{aaai}
	\bibliography{reference-list}
	
	\clearpage
% !TEX root = ./bisec-long.tex
\appendix
\section*{Appendix}
\renewcommand{\thesubsection}{\Alph{subsection}}

%\subsection{Appendix Subsection}
%\subsection{Another appendix Subsection}

\subsection{Proof of Theorem~\ref{thm:noisy}}\label{sub:noisy}
%Our objective is to design an algorithm that outputs a vector that
%is within a distance $\epsilon$ to the true vector with probability
%$1-\delta.$ 
%
%The goal here is to give a candidate value for $T_{\epsilon,\delta},$
%and show that with this $T_{\epsilon,\delta}$ this objective is fulfilled.
%Furthermore, we intend to prove that it is enough to have $T_{\epsilon,\delta}=O(\log\frac{1}{\epsilon\delta})$
%so that the algorithm results in a vector $\hh$which is with probability
%$1-\delta$ within a distance $\epsilon$ of $\h.$ Before proceeding
%with the main lemmas, let's review some notation.
%
%\textbf{Notation}. The \emph{true vector} that we are trying to find
%is denoted by $\h.$ Without loss of generality we assume that $\h$
%is the point $(0,1)$ on the unit circle (see FIG. 1). Also, the final
%result of the algorithm is denoted by $\hh$. The \emph{distance}
%between any two points $x$ and $y$ on the unit circle is defined
%to be the \emph{clockwise angle} between them (see FIG. 1) which is
%equivalent to the \emph{length} of the part of the unit circle which
%is between $x$ and $y.$ The distance between $x$ and $y$ is denoted
%by $d(x,y).$

Let $\{\zeta_n, n\geq 1\}$ be a sequence of independent and identically distributed (iid) Bernoulli($\rho$) 
random variables. Denote by $(\mathcal{F}, \Omega, {\rm{Pr}})$ the 
probability space generated by 
this sequence. At the $m$-th round of \AlgDCt, if $\zeta_m = 1$ (which takes place with independent probability $\rho$) 
then we observe a flipped version of $\sgn \langle x_m, h^* \rangle$. Also, if 
$\zeta_m = 0$ we observe the correct version of $\sgn \langle x_m, h^* \rangle$.

Consider a query of the form $\sgn\langle x,\h\rangle$. This query
divides the unit circle into two parts (half-circles) 
depending on the sign of $\langle x,\h\rangle$ (see Figure~\ref{figproof1}). The two parts
are:
(i) \textit{Preferred part}: all $h$ such that $\sgn\langle x, h\rangle=\sgn\langle x, h^\perp \rangle$, and
(ii) \textit{Unpreferred part}: all $h$ such that $\sgn\langle x, h\rangle=-\sgn\langle x,h^\perp \rangle$.
The two parts can be separated by a line $\ell_{x}$ that passes through the origin. We refer
to Figure~\ref{figproof1} for a schematic explanation.
  \begin{figure}[ht!]
 \begin{center} 
  \includegraphics[width=6cm]{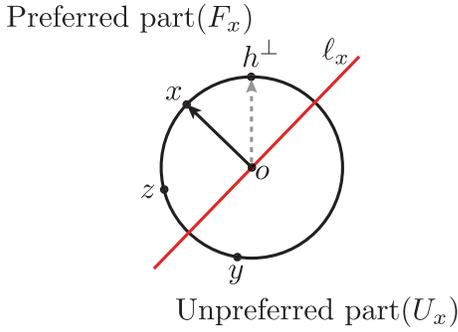}
 \end{center}
 \caption{ For any point $z$ above the line $\ell_x$ we have $\langle z, h^\perp \rangle = \langle x, h^\perp \rangle$. Once we perform the query $\langle x, h^\perp \rangle$, it is more likely that the (noisy) response is indeed the true value $\langle x, h^\perp \rangle$. Therefore, the region above the line $\ell_x$ is in general preferred by the query. In the figure, the sector $(y,z)$ is cut by the line $\ell_x$ and the sector $(z,x)$ is not. Also, $(z,x)$ lies in the preferred part of the query $\langle x, h^\perp \rangle$.}
 \label{figproof1}
 \end{figure}

In this setting, we say that the query $\sgn\langle x,\h\rangle$
\emph{prefers} a point $z$ if $z$ belongs to the preferred part of the query.
Otherwise, we say that the query \emph{does not prefer} $z.$ Also,
we frequently use the line $\ell_{x}$ rather than the query $\sgn\langle x,\h\rangle$
when it causes no ambiguity. Finally,  for a region $A$ on the unit circle  
say that the query $\sgn\langle x,\h\rangle$ \emph{cuts} the region $A$ if and only if the
line $\ell_{x}$ passes through region $A$. Otherwise, we say that the
query \emph{does not cut }$A$. If $\ell_{x}$ does not cut $A,$ then 
$\ell_{x}$ prefers $A$ if $A$ is in the preferred
part and does not prefer $A$ otherwise (see Figure~\ref{figproof1}). 
Finally, for two points $x,y$ we define the distance $d(x,y)$ to be the length of the (smaller) sector 
between them (see Figure~\ref{figproof1}). Clearly, we have $d(x,y) \geq \Vert x  - y\Vert^2$. 

At round $m$ of \AlgDCt a vector $x_m$ is chosen and the (noisy) outcome of $\sgn\langle x_m,\h\rangle$ 
is observed.  As explained in Section~\ref{sec:DC2}, $x_m$ is chosen in a way that the preferred and unpreferred parts have equal 
measures under $p_{m-1}$, i.e., $p_{m-1}(F_{x_m}) = p_{m-1}(U_{x_m}) = \frac 12$.  
Let us see what happens to $p_m$ (the posterior belief about $h^\perp$ at round $m$) after we conduct the query $\sgn\langle x_m,\h\rangle$.
As the result of the query is noisy, we have two different update rules depending on each of the following cases: 
(i)  $\zeta_m =0$, i.e., we observe the correct value $\sgn\langle x_m,\h\rangle$.  
In this case, the measure $p_{m}$
is updated as follows
\[
p_{m+1}(h)=\begin{cases}
2(1-\rho)p_{m}(h) & \text{if \ensuremath{h\in F_{x_m}},}\\
(2\rho) p_{m}(h) & \text{if \ensuremath{h\in U_{x_m}}}.
\end{cases}
\]
(ii) $\zeta_m =1$, i.e., we observe the flipped value $-\sgn\langle x_m,\h\rangle$.  
In this case, the
measure $p_{m}$ is updated as follows
\[
p_{m+1}(h)=\begin{cases}
(2\rho) p_{m}(h) & \text{if \ensuremath{h\in F_{x_m}},}\\
2(1-\rho)p_{m}(h) & \text{if \ensuremath{h\in U_{x_m}}}.
\end{cases}
\]

Consider the number $T_{\epsilon, \delta}$ given in \eqref{T}.  Our goal is to show that 
\begin{equation}
\p\left[\exists y \in S^1 : d(y,h^\perp)>\r\text{ and }p_{\t}(y)\geq p_{\t}(h^\perp)\right]<\delta.\label{eq:relation}
\end{equation}
Clearly, the result of the theorem follows from \eqref{eq:relation}. 
For better illustration, we assume w.l.o.g that $h^\perp = (0,1)$. Consider a point $y$ on the right-hand side of the unit circle such that $d(y, h^\perp) > \frac{\epsilon}{2}$.  
Also, Consider points $z_0,z_K$ such that $d(z_0, h^\perp) = \epsilon /4$ and $d(h^\perp, z_K) = \epsilon/2$. We now divide the sector starting with $z_0$ and ending with $z_K$ into $K:= \t+1$ pints.    
That is, for $i = 1, 2, \cdots, K$ we denote by $z_i$ the point that $d(h^\perp, z_i) = \frac{\epsilon}{4} + i \frac{\epsilon}{4(\t+1)} $ (see Figure~\ref{fig6}).
  \begin{figure}[ht!]
 \begin{center} 
  \includegraphics[width=5cm]{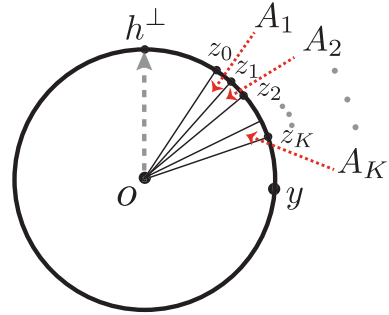}
 \end{center}
 \caption{Different regions for the  proof of Theorem~\ref{thm:noisy}.}\label{fig6}
 \end{figure}
 Also, for $i \geq 1$, we  let the sector starting with $z_{i-1}$ and ending with $z_{i}$ be denoted by $A_i$.   
Note that in the very beginning of the algorithm when
we have uniform measure on the unit circle, each of the regions $A_{i}$
has $p_{0}(A_{i})=\frac{\epsilon}{8\pi\cdot(\t+1)}$ (as $|A_{i}|=\frac{\epsilon}{4(\t+1)}$). 

\AlgDCt has in total $\t$ rounds and in each round $m$ it conducts a query with an associated line $\ell_{x_{m}}$. 
We let $M := \lceil\frac{2\log\frac{2}{\delta}}{\log 4(\ep (1-\ep))}\rceil$ and consider the following events:
\begin{itemize}
\item $E_{1}$: There is at least $M$ lines which separate
$z_{K}$ from $h^\perp$ or equivalently, there is at least $M$ lines that
cut the region $(h^\perp,z_{K}).$ 
\item $E_{2,j}$ ($1\leq j\leq K$): The region $A_{j}$ is \emph{not}
cut by any of the lines $\ell_{1},\ell_{2},\ldots,\ell_{\t}.$
\item $E_{3}$: $\exists y$ such that $d(y, h^\perp)> \frac{\rho}{2}$ and $p_{\t}(y)\geq p_{\t}(h^\perp)$.
\end{itemize}
It is easy to see that $\p\left[\bigcup_{j=1}^{K}E_{2,j}\right]=1$
as we have $\t$ queries and hence by the pigeon-hole principle there is always a region $A_j$ that is not cut by any of the lines.
We can write:
\begin{eqnarray} \nonumber
 &  & \p\left[E_{3}\right]\label{eq:eq2}\\ \nonumber
 & = & \p\left[E_{3}\cap E_{1}\right]+\p\left[E_{3}\cap E_{1}^{c}\right]\label{eq:secondline}\\
 & \leq & \p\left[E_{3}\mid E_{1}\right]+\sum_{j=1}^{\t+1}\p\left[E_{3}\cap E_{1}^{c}\cap E_{2,j}\right].\label{eq:thirdline}
\end{eqnarray}
Now using  Lemma \ref{lem:1} (stated below), we have 
\begin{equation}
\p\left[E_{3}\mid E_{1}\right]\leq\p\left[E_{3}\mid E_{1}\right]\leq \left( 4 \ep(1-\ep) \right)^\frac{M}{2} \leq \frac{\delta}{2} .\label{eq:eq3}
\end{equation}
Let us now bound $\p\left[E_{3}\cap E_{1}^{c}\cap E_{2,j}\right].$
We have
\[
\p\left[E_{3}\cap E_{1}^{c}\cap E_{2,j}\right]\leq\p\left[E_{2,j}\cap E_{1}^{c}\right],
\]
and using the fact that $|E_{2,j}|=\frac{\r}{4(\t+1)}$ we obtain
from Lemma \ref{lem:2} that
\begin{equation*}
\p\left[E_{2,j}\cap E_{1}^{c}\right]\leq (M-1) 
(\theta_1 + \theta_2),
\end{equation*}
and thus
\begin{equation} \label{final-bound}
\sum_{j=1}^{\t+1}\p\left[E_{2,j}\cap E_{1}^{c}\right] 
\leq(\t+1) M (\theta_1 + \theta_2),
\end{equation}
where $\theta_1$ and $\theta_2$ are given in Lemma~\ref{lem:2} with $m \leftarrow T_{\rho, \delta}$ and $k \leftarrow M$. 
Now, we show that the above expression is upper bounded by $\delta/2,$
and hence by using relations 
\eqref{eq:thirdline} and (\ref{eq:eq3}), we get the proof of the
main theorem.

The value of $T_0$  is chosen in such a way that we have 
\begin{equation} \label{e1}
\frac{2 \log(\t +1)}{\t - M}  \leq \frac{\log (2(1-\rho))}{4}.
\end{equation} 
$T_1$ ensures that
\begin{equation}
\frac{2}{\t - M} \log \frac{8\pi}{\epsilon}  \leq \frac{\log (2(1-\rho))}{4}.
\end{equation} \label{e2}
$T_2$ and ensures that
\begin{equation}  \label{e3}
\frac{2M}{\t - M} \log(2\rho) \leq \frac{\log(1-2\rho)}{2}.
\end{equation}
Finally, $T_3$ ensures that
\begin{equation} \label{e4}
(\t + 1) M \exp\left\{ -  \ep \frac{T-M}{6} \left(\frac{\log(2(1-\ep)) }{2\ep \log \frac{1-\rho}{\rho} }\right)^2 \right\} \leq \frac{\delta}{4}. 
\end{equation}
Now, by plugging in \eqref{e1}-\eqref{e4} into the values of $\theta_1$ and $\theta_2$ in \eqref{final-bound} we conclude that the right side of \eqref{final-bound} is bounded by $\frac{\delta}{2}$.

%
%$\epsilon$ (i.e., $\zeta_m = 1$) we observe $-\sgn\langle x_m,\h\rangle.$
%\begin{property}
%\label{pro:update}When we conduct the query $\sgn\langle r,\h\rangle,$
%the following cases might happen:
%\begin{itemize}
%\item Case 1 (with probability $1-\epsilon$): we observe the true sign
%which is $\sgn\langle r,\h\rangle.$ 
%
%\item Case 2 (with probability $\epsilon$): we observe the opposite of
%the true sign, i.e., $-\sgn\langle r,\h\rangle.$ In this case, the
%measure $p_{n}$ is updated as follows
%\[
%p_{n+1}(x)=\begin{cases}
%2\epsilon p_{n}(x) & \text{if \ensuremath{x\in F_{r}},}\\
%2(1-\epsilon)p_{n}(x) & \text{if \ensuremath{x\in U_{r}}}.
%\end{cases}
%\]
%
%\end{itemize}
%\end{property}
\begin{lem}
\label{lem:1} 
Let $x_1, x_2, \cdots, x_m$ be the vectors chosen by \AlgDCt up to round $m$
with $F_{x_i}$ and $U_{x_i}$ being their associated  preferred and unpreferred parts 
 (i.e. $p_{i-1}(F_{x_i}) = p_{i-1}(U_{x_i}) = 1 / 2$). 
Consider two points $h_1, h_2$ such that $h_1 \in \cap_{i=1}^m F_{x_i}$ and $h_2 \in \cap_{i=1}^m U_{x_i}$. We have for $\beta > 0$ that 
%
%\[
%\p\left[\log_{2}\frac{p_{m}(x)}{p_{0}(x)}\leq m(1-\beta)(1-h_{2}(\epsilon))\right]\leq e^{-m\beta^{2}\alpha_{\epsilon}},
%\]
% For $x\in\bigcap_{j=1}^{m}F_{j}$ and $y\in\bigcap_{j=1}^{m}U_{j},$
%then $\forall\beta\in(0,1],$ we have
\[
\p\left[ p_m(x) < p_m(y) \right]  \leq \left(4 \ep (1-\ep) \right)^m.
\]
\end{lem}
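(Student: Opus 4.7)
The plan is to track the ratio $p_m(h_1)/p_m(h_2)$ directly, and then reduce the target event to a Chernoff tail estimate on a sum of iid Bernoulli$(\rho)$ variables. The key structural fact is that, by hypothesis, $h_1$ lies in $F_{x_i}$ for every $i$ and $h_2$ lies in $U_{x_i}$ for every $i$, so each Bayesian update in Algorithm~\ref{alg:DC2} scales $p(h_1)$ and $p(h_2)$ by opposite factors drawn from $\{2(1-\rho),\,2\rho\}$.

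First, I would unfold the update rule explicitly. When the noise indicator $\zeta_i=0$, the observed sign agrees with the true sign, so $p(h_1)$ picks up factor $2(1-\rho)$ while $p(h_2)$ picks up factor $2\rho$; when $\zeta_i=1$ the two factors swap. Since the prior $p_0$ is uniform on $S^1$, $p_0(h_1)=p_0(h_2)$, and the ``$2$''s cancel in the ratio. Telescoping over the $m$ rounds yields
\begin{equation*}
\frac{p_m(h_1)}{p_m(h_2)} \;=\; \left(\frac{1-\rho}{\rho}\right)^{m-2S_m}, \qquad S_m:=\sum_{i=1}^m \zeta_i.
\end{equation*}
Because $\rho<1/2$ makes $(1-\rho)/\rho>1$, the event $\{p_m(h_1)<p_m(h_2)\}$ is exactly $\{S_m>m/2\}$: the posterior prefers $h_2$ to $h_1$ precisely when strictly more than half of the queries were flipped by noise.

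Second, I would bound $\p[S_m>m/2]$ via a standard Chernoff argument on the iid Bernoulli$(\rho)$ variables $\zeta_1,\ldots,\zeta_m$. Tilting with $t=\log\tfrac{1-\rho}{\rho}$ in the MGF bound $\p[S_m\geq m/2]\leq e^{-tm/2}\,\e[e^{tS_m}] = e^{-tm/2}(1-\rho+\rho e^t)^m$ causes both pieces to simplify (the second factor becomes $(2(1-\rho))^m$ and the first becomes $(\rho/(1-\rho))^{m/2}$), producing $(4\rho(1-\rho))^{m/2}$, which is the claimed bound. I note in passing that the exponent displayed in the lemma statement looks like a typo for $m/2$, matching the $(4\rho(1-\rho))^{M/2}$ that is actually invoked inside the proof of Theorem~\ref{thm:noisy}.

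I do not foresee a substantive obstacle. The telescoping is immediate once the opposite-factor structure is isolated, and the Chernoff step is routine and optimizes cleanly at $t=\log((1-\rho)/\rho)$. The only place calling for care is verifying that the hypothesis $h_1\in\bigcap_i F_{x_i}$ and $h_2\in\bigcap_i U_{x_i}$ makes the factor picked up in round $i$ a deterministic function of $\zeta_i$ alone, so that the product depends on the noise sequence only through $S_m$; this is exactly what is given.
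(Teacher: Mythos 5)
Your proof is correct and follows essentially the same route as the paper's: both track the log-ratio of the posterior at the two points, observe that it telescopes to $(m-2S_m)\log\frac{1-\rho}{\rho}$ with $S_m=\sum_i\zeta_i$, and apply a Chernoff bound to $\p[S_m\geq m/2]$ to obtain $(4\rho(1-\rho))^{m/2}$. You are also right that the exponent $m$ in the lemma's displayed bound is a typo for $m/2$, which is the form actually invoked in the proof of Theorem~\ref{thm:noisy}.
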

\begin{proof}
For $i \in [m]$, define the random variable $Z_i$ as $Z_i \triangleq \log \frac{p_i(x)}{p_i(y)} $. Using the update rules of $p_i$ that we explained above, it is easy to see that for $i \geq 1$: 
$Z_i = Z_{i-1} + (1 - 2 \zeta_i)\log \frac{1-\ep}{\ep} $. Also, as $p_0$ is uniform over $S^1$ we have  $Z_0 = 0$. We thus 
have $Z_m = \sum_{i=1}^m(1-2 \zeta_i)\log \frac{1-\ep}{\ep}$.
Hence,
\begin{eqnarray*}
& & \text{Pr} \left[Z_m \leq 0\right] \\
&=& \text{Pr} \left[\log \frac{1-\ep}{\ep} \sum_{i=1}^m (1-2 \zeta_i) \leq 0 \right]\\
&= &\text{Pr} \left[\sum_{i=1}^m \zeta_i \geq  \frac 12 \right]\\
&\leq & (4 \ep (1-\ep))^\frac{m}{2},
\end{eqnarray*}
where the last step follows directly from the so called Chernoff bound. 
\end{proof}
We note that the vector $h^\perp$ is always a member of the preferred part of any test. As a result, at any round of \AlgDCt we have that $h^\perp \in \cap_{i=1}^m F_{x_i}$.   

\begin{lem}
\label{lem:2}Consider a region $A$ on the unit circle which does
not contain $h^\perp$. Assume we are at round $m$ of \AlgDCt where
a sequence of queries with associated lines $\ell_{x_1},\ell_{x_2},\ldots,\ell_{x_m}$ have been conducted. 
We define events $E_{1}$ and
$E_{2}$ as
\begin{itemize}
\item $E_{1}\triangleq\text{None of the lines \ensuremath{\ell_{x_i}} cuts \ensuremath{A}};$
\item $E_{2}\triangleq\text{At most \ensuremath{k} of the lines do not prefer \ensuremath{A}}$,
\end{itemize}
where $k$ is an an integer. We have
\begin{equation*}
\p\left[E_{1}\cap E_{2}\right]\leq k (\theta_1+\theta_2),
\end{equation*}
where
\begin{equation*} 
\theta_1 = \exp \left\{- \ep \frac{m-k}{6} \left(\frac{\log(2(1-\ep)) - \frac{2}{m-k} \log \frac{2\pi}{|A|})}{\ep \log(\frac{1-\ep}{\ep})} \right)^2 \right\},
\end{equation*}
and
\begin{equation*} 
\theta_2  = \exp \left\{- \ep \frac{m-k}{6} \left(\frac{\log(2(1-\ep)) + \frac{2k}{m-k} \log (2\ep)}{\ep \log(\frac{1-\ep}{\ep})} \right)^2 \right\}.
\end{equation*}
\end{lem}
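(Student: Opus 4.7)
The plan is to track how the posterior mass $p_m(A)$ evolves under $E_1$, and to use the deterministic constraint $p_m(A)\le 1$ (since $p_m$ is a probability measure on $S^1$ and $A\subseteq S^1$) to translate the occurrence of $E_1\cap E_2$ into an anomalous-deviation event for the Bernoulli noise variables $\zeta_i$, which I will then control via a Bernstein-type concentration inequality.

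First, under $E_1$ the region $A$ sits entirely inside either $F_{x_i}$ or $U_{x_i}$ at every step, so the Bayesian update of Algorithm~\ref{alg:DC2} multiplies $p_m(A)$ by a single scalar per step. Let $S\subseteq[m]$ be the (random, adaptively chosen) set of queries that prefer $A$ and let $T=[m]\setminus S$; under $E_2$, $|T|\le k$. Writing $N_S=\sum_{i\in S}\zeta_i$ and $N_T=\sum_{i\in T}\zeta_i$, a direct induction on $m$ gives
$$
p_m(A) \;=\; \frac{|A|}{2\pi}\,\bigl(2(1-\rho)\bigr)^{|S|-N_S+N_T}\,(2\rho)^{N_S+|T|-N_T}.
$$
Imposing $p_m(A)\le 1$, taking logarithms, and dividing by $\log\frac{1-\rho}{\rho}>0$ rearranges to the key inequality
$$
N_S - N_T \;\ge\; \frac{|S|\log(2(1-\rho)) + |T|\log(2\rho) - \log\frac{2\pi}{|A|}}{\log\frac{1-\rho}{\rho}}.
$$
Subtracting the conditional mean $\rho(|S|-|T|)$, the required deviation of $N_S-N_T$ from its mean equals $\bigl(|S|\mu_+ + |T|\mu_- - \log\frac{2\pi}{|A|}\bigr)/\log\frac{1-\rho}{\rho}$, where $\mu_+=(1-\rho)\log(2(1-\rho))+\rho\log(2\rho)>0$ and $\mu_-=(1-\rho)\log(2\rho)+\rho\log(2(1-\rho))<0$; this deviation grows linearly in $m-k$.

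Next, I would bound the probability of this deviation by a Bernstein-type tail inequality. Conditional on $S$ and $T$, the per-step contribution $(\mathbf{1}_S(i)-\mathbf{1}_T(i))\zeta_i$ lies in $[-1,1]$ with variance $\rho(1-\rho)\le\rho$, so the total conditional variance of $N_S-N_T$ is at most $\rho m$. Bernstein then yields an exponential bound of the form $\exp\{-\rho(m-k)(\,\cdot\,)^2/6\}$, matching the shape of both $\theta_1$ and $\theta_2$. The two terms arise from two ways of simplifying the threshold: $\theta_1$ drops the (negative) $|T|\log(2\rho)$ summand and retains the $\log(2\pi/|A|)$ correction, whereas $\theta_2$ pins the $|T|\log(2\rho)$ summand at its extremal value $k\log(2\rho)$ and drops the $\log(2\pi/|A|)$ term. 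A union bound over $|T|\in\{0,1,\dots,k-1\}$, each case contributing at most $\theta_1+\theta_2$, supplies the prefactor $k$ in the stated bound $k(\theta_1+\theta_2)$.

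The main technical obstacle is the adaptivity of the query sequence: the indicator $\mathbf{1}_S(i)$ is a function of $\zeta_1,\dots,\zeta_{i-1}$ because $x_i$ is chosen from the posterior $p_{i-1}$, so $N_S$ is not a sum of independent Bernoullis in any naive sense. I would circumvent this by defining the filtration $\mathcal{F}_i=\sigma(\zeta_1,\dots,\zeta_i)$, observing that $\mathbf{1}_S(i)-\mathbf{1}_T(i)$ is $\mathcal{F}_{i-1}$-measurable, and applying a martingale form of Bernstein's inequality (e.g., Freedman's inequality) to the martingale-difference sequence $\{(\mathbf{1}_S(i)-\mathbf{1}_T(i))(\zeta_i-\rho)\}_{i=1}^m$. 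The conditional quadratic variation is again bounded by $\rho m$, which produces the $\rho$ factor appearing in the denominator of the exponent of $\theta_1$ and $\theta_2$.
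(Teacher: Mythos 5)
Your overall architecture coincides with the paper's: under $E_1$ the posterior mass $p_i(A)$ is multiplied by a single scalar per round, so $\log p_m(A)$ is an explicit function of the noise bits; the constraint $p_m(A)\le 1$ converts $E_1\cap E_2$ into a large-deviation event for $N_S-N_T$; a Chernoff-type tail bound plus a union bound over the number of non-preferring lines finishes the job. Your product formula for $p_m(A)$ and the resulting threshold inequality are exactly what the paper derives (there written as $Z_m=\log\frac{|A|}{2\pi}+\sum_i F_i+\sum_i U_i$ with per-step increments $F_i,U_i$). Your martingale/Freedman treatment of the adaptivity of the preferring set $S$ is in fact \emph{more} careful than the paper, which simply reorders the queries ``without loss of generality'' and applies an i.i.d.\ Chernoff bound; that is a legitimate refinement.

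The one step that does not work as written is the derivation of the two-term bound $k(\theta_1+\theta_2)$. You say the two terms come from ``two ways of simplifying the threshold'' of a single deviation event. Weakening the threshold of one event in two different ways yields two upper bounds on the same probability, hence the \emph{minimum} of the two, not their sum; and neither weakened threshold alone reproduces the exponents of $\theta_1$ and $\theta_2$, both of which carry the factor $\frac{2}{m-k}$ that betrays a sum over only $(m-k)/2$ queries. What the paper actually does is split the random sum itself: it partitions the $m-j$ preferring queries into two blocks of size $(m-j)/2$, pairs the first block with the threshold $\log\frac{2\pi}{|A|}$ (giving $\theta_1$) and the second block together with all $j$ non-preferring queries with threshold $0$ (giving $\theta_2$), and then uses $\{X_1+X_2\le a_1+a_2\}\subseteq\{X_1\le a_1\}\cup\{X_2\le a_2\}$ to add the two tail probabilities. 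You need that decomposition of $N_S-N_T$ (or an equivalent one) before the sum $\theta_1+\theta_2$ is justified; with it in place, the rest of your plan goes through.
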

%\[
%\p\left[E_{1}\cap E_{2}\right]\leq k \exp \left( -m \theta(m,k,|A|) \right) ,
%\]
%where 
%\begin{eqnarray*}
% &  & \theta(m,k,|A|)  = \left(\frac{\log\frac{2\pi}{|A|}-(m-k) c(\ep)-k d(\ep)}{m\log\frac{1-\ep}{\ep}}\right)^{2},
%\end{eqnarray*}
%and $c(\ep)=1 + \ep\log\ep + (1-\ep)\log(1-\ep)$ and $d(\ep)=1 + \ep\log(1-\ep) +(1-\ep)\log\ep.$\end{lem}
\begin{proof}
We have
\begin{equation}
\p\left[E_{1}\cap E_{2}\right]\leq\p\left[E_{2}\mid E_{1}\right]\leq\sum_{j=1}^{k}\p\left[E_{2,j}\mid E_{1}\right],\label{eq:1}
\end{equation}
where we define 
\[
E_{2,j}\triangleq\text{Exactly \ensuremath{j} lines do not prefer \ensuremath{A}.}
\]
We will now calculate $\p\left[E_{2,j}\mid E_{1}\right].$ In the
beginning, $p_{0}$ puts a uniform measure on $A$ and hence $p_{0}(A)=\frac{|A|}{2\pi}.$
Let us first investigate the dynamics of $p_{i-1}(A)$ when we conduct
the $i$-th query and condition on event $E_1$ (i.e. given that none of the lines cut $A$).
In this setting, we define the random variables $Z_i = \log p_{i} (A)$.
At time $i$, assuming that the line $\ell_{x_i}$ does not cut $A$, $Z_i$ has different update rules depending on 
the two cases whether  the line $\ell_{x_i}$  prefers $A$ or does not prefer $A$.
(i) first case: if the line $\ell_{x_i}$ prefers
$A$, then we know that either with
probability $1-\rho$ (if $\zeta_i = 0$) we have $p_{i}(A)=2(1-\ep)p_{i-1}(A)$ and
with probability $\ep$ (if $\zeta_i = 1$) we have $p_{i}(A)=(2\rho) p_{i-1}(A)$. 
Thus, we can write $Z_i = Z_{i-1} + F_i$, where $F_i \triangleq \zeta_i \log (2 \rho) + (1-\zeta_i) \log(2(1-\rho))$. 
(ii) second case: if $\ell_{x_i}$ does not prefer $A$, then using a  similar argument we obtain
$Z_{i} = Z_{i-1} + U_i$, where $U_i \triangleq \zeta_i \log (2(1- \rho)) + (1-\zeta_i) \log(2 \rho )$. 
%In other words, if we define the random variable $F$ as
%\[
%F=\begin{cases}
%\log_{2}2(1-\ep) & \text{with probability \ensuremath{1-\epsilon,}}\\
%\log_{2}2\ep & \text{with probability \ensuremath{\epsilon}}.
%\end{cases}
%\]
%Then we get 
%\[
%\log_{2}p_{i}(A)=\log_{2}p_{i-1}(A)+F.
%\]
%On the other hand, given event $E_{1},$ if the line $l_{i}$ does
%not favor $A,$ then similarly we obtain 
%\[
%\log_{2}p_{i}(A)=\log_{2}p_{i-1}(A)+U,
%\]
%when the random variable $U$ is defined as 
%\[
%U=\begin{cases}
%\log_{2}2\ep & \text{with probability \ensuremath{1-\epsilon,}}\\
%\log_{2}2(1-\ep) & \text{with probability \ensuremath{\epsilon}}.
%\end{cases}
%\]
Now, in order to find an upper bound on $\p\left[E_{2,j}\mid E_{1}\right]$,
we assume without loss of generality that in the first $m-j$ rounds we the lines are as in the first case and 
in the last $j$ rounds the lines are as in the second case (note that any other given order of the lines is statistically 
equivalent to this simple order that we consider). 
%assume that $U_{1},U_{2},\ldots,U_{j}$ correspond to the queries
%$l_{i_{1}},l_{i_{2}},\ldots,l_{i_{j}}$ which do not favor $A.$ Further
%assume that $F_{1},F_{2},\ldots,F_{T-j}$ correspond to the rest of
%the queries which do favor $A.$ It is important to note that all
%the $F_{i}$'s and $U_{l}$'s are \emph{independent} of each other
%and we have
%\[
%\begin{cases}
%F_{i}\sim F & i=1,2,\ldots,T-j\\
%U_{l}\sim U & l=1,2,\ldots,j
%\end{cases}.
%\]
%We also have
\begin{eqnarray*}
Z_m & = & Z_0 + \sum_{i=1}^{m-j}F_{i}+\sum_{i=m-j+1}^{m}U_{i}\\
 & = & \log_{2}\frac{|A|}{2\pi} + \sum_{i=1}^{m-j}F_{i}+\sum_{i=m-j+1}^{m}U_{i}.
\end{eqnarray*}
Now, noting that $p_{m}(A)\leq1$ and hence $\log p_{m}(A)\leq0,$
we obtain 
\begin{eqnarray*}
 &  & \p\left[E_{2,j}\mid E_{1}\right]\\
 & \leq & \p\left[\log_{2}p_{0}(A)+\sum_{i=1}^{m-j}F_{i}+\sum_{i=m-j+1}^{m}U_{i}\leq0\right]\\
 & = & \p\left[\sum_{i=1}^{m-j}F_{i}+\sum_{i=m-j+1}^{m}U_{i}  \leq\log_{2}\frac{2\pi}{|A|}\right]\\
% & = & \p\left[\sum_{i=1}^{m-j}F_{i}+\sum_{i=m-j+1}^{m}U_{i}-(m-j)\e[F_1]-j\e[U_1]\right.\\
% &  & \quad \quad \quad\quad \quad \quad \quad  \leq\left.\log_{2}\frac{2\pi}{|A|}-(m-j)\e[F_1]-j\e[U_1]\right]\\
% & = & \p\left[\sum_{i=1}^{m-j}F_{i}+\sum_{i=m-j+1}^{j}U_{i}-(m-j)c(\epsilon)-j d (\ep)\right.\\
% &  & \quad \quad \quad \quad \quad \quad \quad \quad \leq\left.\log_{2}\frac{2\pi}{|A|}-(m-j) c(\epsilon)-j d_{2}(\ep)\right],
\end{eqnarray*}
Let us now define
\begin{equation*}
\alpha_1 = \text{Pr}\left [ \sum_{i=1}^{\frac{m-j}{2}} F_i \leq \log \frac{2 \pi}{A}\right]
\end{equation*}
and 
\begin{equation*}
\alpha_2 = \text{Pr}\left [ \sum_{i=1+\frac{m-j}{2}}^{m-j} F_i +  \sum_{i=m-j+1}^{m} U_i   \leq 0\right]
\end{equation*}
Using the union bound, we have
\begin{equation}
\p\left[E_{2,j}\mid E_{1}\right] \leq \alpha_1 + \alpha_2. 
\end{equation}
Now, to bound $\alpha_1$ we obtain after some simplifications that
\begin{equation*}
\alpha_1 = \p\left[ \sum_{i=1}^{\frac{m-j}{2}} \zeta_i \geq 
\ep \times \frac{m-j}{2} \times \frac{ \log(2(1-\ep)) - \frac{2}{m-j}\log \frac{2\pi}{|A|}}{\ep \log \frac{1-\ep}{\ep}} \right],
\end{equation*}
and by using the Chernoff bound we get
\begin{equation} \label{alpha_11}
\alpha_1 \leq \exp \left\{- \ep \frac{m-j}{6} \left(\frac{\log(2(1-\ep)) - \frac{2}{m-j} \log \frac{2\pi}{|A|})}{\ep \log(\frac{1-\ep}{\ep})} \right)^2 \right\}.
\end{equation}
%where we define 
%\[
%\begin{cases}
%c(\epsilon)=1 + \ep\log_{2}\ep +(1-\ep)\log_{2}(1-\ep)\\
%d (\epsilon)=1 + \ep\log_{2}(1-\ep)+ (1-\ep)\log_{2}\ep
%\end{cases}
%\]
%and thus 
%\[
%\begin{cases}
%\e[F_1]=c(\ep)\\
%\e[U_1]=d(\ep)
%\end{cases}.
%\]
To bound $\alpha_2$ we can similarly write after some simple steps that
\begin{eqnarray*}
\alpha_2 \leq \p\left[   \sum_{i= 1+\frac{m-j}{2}}^{m-j} \zeta_i   \geq   \ep \times \frac{m-j}{2} \frac{ \log (2(1-\ep)) +\frac{2 j}{m-j} \log (2\ep) }{\ep \log \frac{1-\ep}{\ep}} \right],
\end{eqnarray*}
and using the Chernoff bound we get
\begin{equation} \label{alpha_21}
\alpha_2  \leq \exp \left\{- \ep \frac{m-j}{6} \left(\frac{\log(2(1-\ep)) + \frac{2j}{m-j} \log (2\ep)}{\ep \log(\frac{1-\ep}{\ep})} \right)^2 \right\}.
\end{equation}
We further note that both of the upper bounds on $\alpha_1$ and $\alpha_2$ decrease when we increase $j$. Hence, the proof of the theorem follows by letting $j = k$ in \eqref{alpha_11} and \eqref{alpha_21}, and also plugging these bounds into \eqref{eq:1}. 

%Now, by using Hoeffding's inequality we get
%\begin{equation}
%\p\left[E_{2,j}\mid E_{1}\right]\leq e^{-T\left(\frac{\log_{2}\frac{2\pi}{|A|}-(T-j)(1-h_{2}(\ep))-j(1-d_{2}(\ep))}{T\log_{2}\frac{1-\ep}{\ep}}\right)^{2}}.\label{eq:2}
%\end{equation}
%Now it is easy to see that $1-h_{2}(\ep)\geq1-d_{2}(\ep)$ and thus
%the right-hand side of (\ref{eq:2}) can be bounded by letting $j=k.$
%Thus we have $\forall j=1,2,\ldots,k,$
%\[
%\p\left[E_{2,j}\mid E_{1}\right]\leq e^{-T\left(\frac{\log_{2}\frac{2\pi}{|A|}-(T-k)(1-h_{2}(\ep))-k(1-d_{2}(\ep))}{T\log_{2}\frac{1-\ep}{\ep}}\right)^{2}}.
%\]
% Finally by using (\ref{eq:1}) we get
%\[
%\p\left[E_{2}\mid E_{1}\right]\leq ke^{-T\left(\frac{\log_{2}\frac{2\pi}{|A|}-(T-k)(1-h_{2}(\ep))-k(1-d_{2}(\ep))}{T\log_{2}\frac{1-\ep}{\ep}}\right)^{2}}.
%\]
%
\end{proof}

\subsection{\AlgDCt in the Noiseless Case}\label{sub:noiseless}

%\paragraph{Noise-Free Case:} 
\AlgDCt (outlined in Algorithm~\ref{alg:DC2}) in the noiseless case reduces to the binary search.
In this section, we explain \AlgDCt  in the noiseless case (the binary search) with the help of a running example 
given in Figure~\ref{fig:runn-ex}. As we will see, after each round of \AlgDCt the possible region that $h^\perp$ can belong to will be ``halved''. 
\begin{figure}[t!]
%	\vspace{-.2cm}
	\begin{center} 
		\includegraphics[width=8cm]{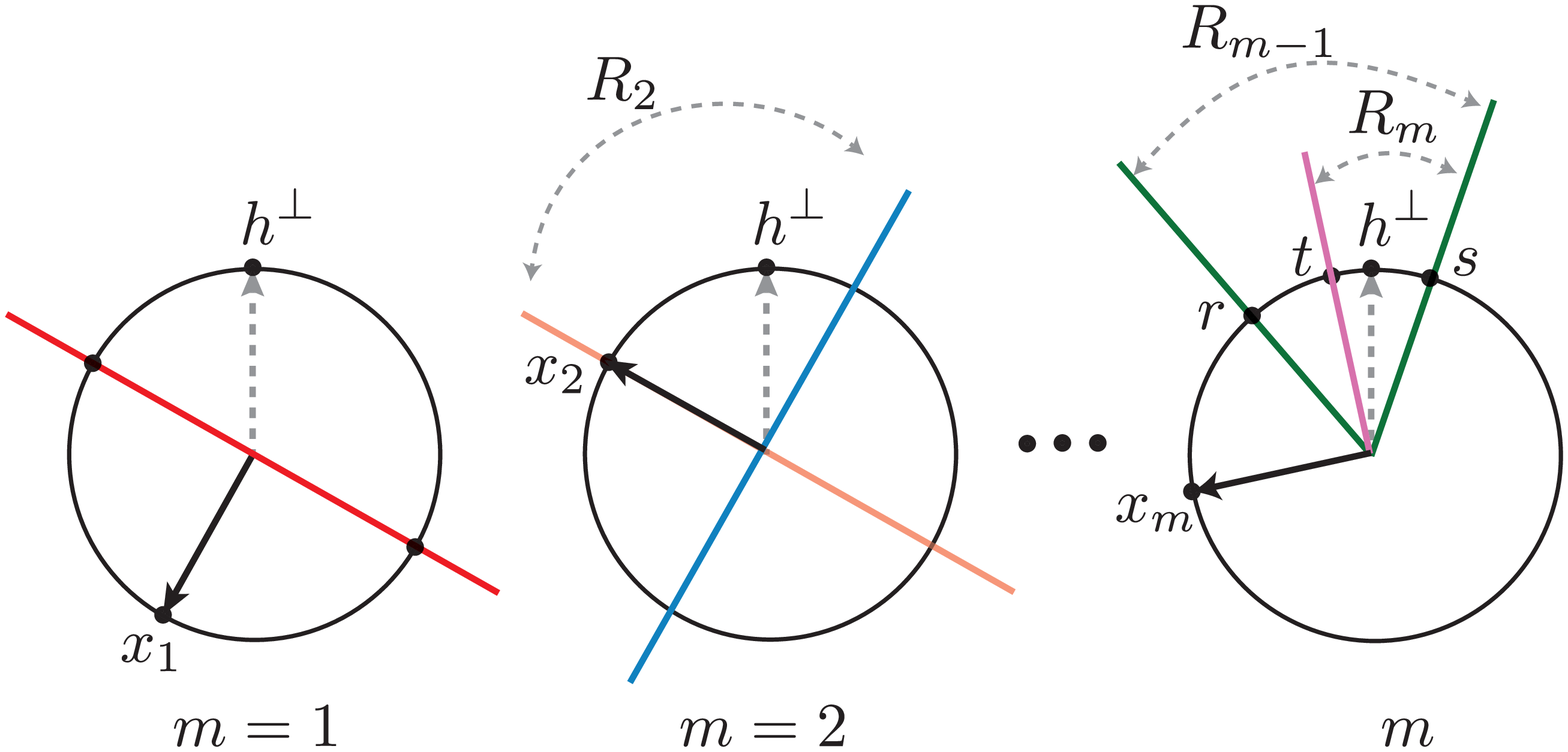}
%		\vspace{-.3cm}
	\end{center}
	\caption{\footnotesize An example to illustrate \AlgDCt in the noiseless setting.  
		In the first round,  $x_1$ is arbitrarily chosen on $S^1$. For the choice in the figure, we have  $\sgn \left\langle x_1, h^{*}\right\rangle = \sgn \left\langle x_1, h^{\perp}\right\rangle =-1$.  For any point $h$ above the red line we have that $\sgn \left\langle x, h\right\rangle = -1$ and for the points outside this half-circle the result is $+1$. Therefore, the distribution (pdf of) $p_1$ is uniform on the region above the red line and is zero below it. For round $m=2$ it is easy to see that the direction of $x_2$  should  be along the red line. For $x_2$ chosen as in the figure, we have  $\sgn \left\langle x_2, h^{*}\right\rangle = +1$ and hence at the end of the second round \AlgDCt concludes that the vector $h^\perp$ could uniformly be any point inside $R_2$. In a generic round $m$, any vector orthogonal to the mid-point of sector $R_{m-1}$ can be considered as a candidate for $x_m$.   For the choice in the figure, we have $\sgn \left\langle x_m, h^{\perp}\right\rangle = -1$. Thus, at the end of round $m$, \AlgDCt concludes that  $h^\perp$ can uniformly be any point inside $R_m$.   \label{fig:runn-ex}} 
	%\vspace{-.4cm}
\end{figure}

We first note that as the initial distribution $p_0$ is assumed to be the uniform distribution on $S^1$, the vector $x_1$ (see step 2-(a) of Algorithm~\ref{alg:DC2}) can indeed be any point on the unit circle $S^1$. Thus, \AlgDCt chooses $x_1$ arbitrarily on $S^1$. 
By \eqref{eq-proj}, using the query $\sgn \left\langle x_1, h^{*}\right\rangle$ will also give us the value of    
$\sgn \left\langle x_1, h^{\perp}\right\rangle$. Depending on this value, it is easy to verify that only half of $S^1$ can possibly contain $h^\perp$ (see Figure~\ref{fig:runn-ex}). Let us denote this region by $R_1$. Hence, the probability distribution $p_1(h)$ (which is our current belief about $h^\perp$) is updated as follows: for $h \notin R_1$ we have that $p_1(h) = 0$, and as all the points inside the half-circle $R_1$ are equiprobable, we have for $h \in R_1$ that $p_1(h) = 1 / \pi$. In other words, at time $m=0$ the vector $h^\perp$ could have been anywhere on the unit circle, but, after round $m=1$ it can only belong to the half-circle $R_1$. Thus, after the first round, \AlgDCt ``halves'' the admissible region of $h^\perp$. Continuing in this theme, it is not hard to verify that (see Figure~\ref{fig:runn-ex}) at round $m=2$ the value of $p_2(h)$ is non-zero and  uniform only on a region $R_2$ which is a quarter-circle. In an inductive manner, letting $R_{m-1}$ denote the admissible region (sector) at round $m-1$ (see Figure~\ref{fig:runn-ex}) and assuming that $p_{m-1}$ is only non-zero and uniform on the sector $R_{m-1}$, then $x_m$ at round $m$ is precisely the vector that is orthogonal to the midpoint of the sector $R_{m-1}$.  Therefore, after observing the value of $\sgn \left\langle x_m, h^{*}\right\rangle$, the admissible region $R_m$ is  the better half of $R_{m-1}$ that is compatible with the observation (i.e., it contains $h^\perp$). Also, $R_m$ is again a sector and $p_m$ will be uniform on $R_m$ and zero outside. It is also easy to see that the circular angle for the sector $R_m$ is $\frac{\pi}{2^m}$. The following statement is now immediate.
\begin{thm} \label{thm:noiseless}
	Consider \AlgDC in the  absence of noise ($\rho = 0$).  If we let $T_{\epsilon, \delta} = \lceil\log_{2}\frac{\pi}{\epsilon}\rceil$, then it outputs a vector that is within a distance $\epsilon$ of $h^\perp$. 
\end{thm}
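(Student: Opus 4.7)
The plan is to formalize the inductive sketch given in the paragraph preceding the theorem. I will establish an invariant on the sequence of posterior distributions by induction on the round index, and then bound the final estimation error once the support of the posterior is small enough.

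Step 1 (inductive invariant). I will show by induction on $m \geq 0$ that: (i) there is a sector $R_m \subset S^1$ such that $p_m$ is uniform on $R_m$ and vanishes outside it; (ii) $h^\perp \in R_m$; and (iii) the arc length of $R_m$ is $\pi/2^m$. The base case $m=0$ is immediate from the uniform prior together with $h^\perp \in S^1$. For the step $m \to m+1$, I will observe that, when $p_m$ is uniform on $R_m$, the balance equation $\int_{S^1}\sgn\langle x,h\rangle p_m(h)\,dh=0$ on line~3 of Algorithm~\ref{alg:DC2} is solved precisely by the two unit vectors orthogonal to the midpoint of $R_m$; geometrically, the line $\ell_{x_{m+1}}$ through the origin perpendicular to $x_{m+1}$ bisects $R_m$ into two equal sub-sectors. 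Using \eqref{eq-proj}, the observed sign $\sgn\langle x_{m+1}, h^*\rangle = \sgn\langle x_{m+1}, h^\perp\rangle$ is a deterministic function of which sub-sector contains $h^\perp$; in the noiseless Bayesian update ($\rho=0$) the factor $2\rho=0$ zeroes out the density on the sub-sector not containing $h^\perp$, while the factor $2(1-\rho)=2$ doubles it on the other. Hence $p_{m+1}$ is uniform on the correct half $R_{m+1}$, which still contains $h^\perp$ and has arc length half that of $R_m$, closing the induction.

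Step 2 (error bound). Taking $m = T_{\epsilon,\delta}=\lceil\log_2(\pi/\epsilon)\rceil$ in the invariant yields $|R_T| \leq \epsilon$. The algorithm returns $\hat{e}\in\argmax_{h \in S^1} p_T(h)$, which must lie in $R_T$ since $p_T$ is strictly positive on $R_T$ and zero elsewhere. Both $\hat{e}$ and $h^\perp$ then lie in a common arc of length at most $\epsilon$, and since the Euclidean chord length between two points on the unit circle is bounded above by the arc separating them, $\|\hat{e}-h^\perp\|\leq\epsilon$, as required.

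The main (mild) obstacle is the geometric identification in Step~1 that the integral balance equation reduces, under a uniform-on-a-sector posterior, to the orthogonal-to-midpoint rule for $x_{m+1}$; this is what drives the exact halving and hence the logarithmic rate. Once this identification is made, the remainder of the argument is a clean deterministic halving and, as remarked in the paper, makes no use of $\delta$, consistent with the success probability being one in the noiseless setting.
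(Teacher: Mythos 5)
Your proof is correct and follows essentially the same route as the paper's: the paper's Appendix~\ref{sub:noiseless} establishes the same inductive invariant (that $p_m$ is uniform on a sector $R_m$ containing $h^\perp$ with circular angle $\pi/2^m$, obtained by querying the vector orthogonal to the midpoint of $R_{m-1}$) and then reads off the bound from the arc length at round $T_{\epsilon,\delta}$. Your write-up merely makes explicit the two points the paper leaves implicit, namely that the balance equation forces the orthogonal-to-midpoint choice and that the chord is bounded by the arc; both are fine.
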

A few comments are in order: The above guarantee for \AlgDCt holds with probability one and thus the parameter $\delta$ is irrelevant in the noiseless setting. Furthermore, during each round of \AlgDCt, the distribution $p_m$  can be represented by only two numbers (the starting and ending points of the sector $R_m$), and the vector $x_m$ can be computed  efficiently (it is the orthogonal vector to the midpoint of $R_m$). Therefore,  assuming one unit of complexity for performing the queries, \AlgDCt can be implemented with complexity $O(T_{\epsilon, \delta})$. Finally, by using Theorem~\ref{thm:DC}, we conclude that \AlgDC requires ${O}(d \log \frac{1}{\epsilon})$ queries with computational complexity  ${O}(d \log \frac{1}{\epsilon})$.

\subsection{Analysis of Repetitive Querying}\label{sub:rep}

Firstly, we would like to compute the probability that the majority vote gives us the correct outcome. Let $ I_i $ ($ 1\leq i\leq R $) is the indicator random variable of the event that the $ i $-th query gives us the right outcome. We know that $ \{I_i:1\leq i\leq R\} $ are i.i.d.\ Bernoulli random variables with success probability $ 1-\rho $. Let $ S_R = \sum_{i=1}^{R} I_i $. By Hoeffding's inequality, we have\begin{eqnarray*}
& & \Pr[S_R \leq R/2]\\
 & = & \Pr[S_R - E[S_R] \leq R/2-E[S_R]]\\
 &= & \Pr[S_R - E[S_R] \leq - R(1/2-\rho)]\\
 & \leq & e^{-2(1/2-\rho)^2 R}.
\end{eqnarray*}
Suppose that the binary search queries $ n_0 $ (distinct) points in total throughout the entire procedure. By the union bound, the probability that all $ n_0 $ majority votes give us the right outcome is greater than or equal to $ 1-n_0 e^{-2(1/2-\rho)^2 R}$. In order to ensure that this probability is at least $ 1-\delta $, we need \[ R\geq \frac{\log(n_0/\delta)}{2(1/2-\rho)^2}. \] Therefore the total number of queries is at least \[ n_0 R \leq n_0 \frac{\log(n_0/\delta)}{2(1/2-\rho)^2}. \]
Recall that $ n_0=O(\log (1/\epsilon) ) $ (see Theorem~\ref{thm:noiseless}). Plugging this into the expression of $ n R_0 $, we obtain that the query complexity of repetitive querying is $ O(\log(1/\epsilon)(\log \log (1/\epsilon)+\log (1/\delta )) $.
\subsection{Proof of Theorem~\ref{thm:DC}}\label{sub:DC}

At each round, we replace two vectors in $E$, say $e_1$ and $e_2$, with the
output of $\AlgDCt (e_1, e_2, \epsilon, \delta)$;
then the cardinality of $E$ decreases by $1$. Therefore, each call to
\AlgDCt will result in the cardinality of $E$
decreasing by $1$. Initially, there are $d$ elements in $E$; when the
algorithm terminates, there is only one element (i.e., the final output of the
algorithm) in $E$. Thus throughout the entire process of the algorithm, the
cardinality of $E$ decreases by $d - 1$; therefore, there are $d - 1$ calls to
\AlgDCt. If the probability of success for
\AlgDCt is at least $1 - \delta$, then by the union
bound the probability of success of \AlgDC is at
least $1 - (d - 1) \delta$.

For the second part of the theorem, we prove a more general statement: Assume
that we run \AlgDC with an input being an
orthonormal set $\{ e_1, e_2, \ldots, e_t \}$ where $e_i, h^{\ast} \in
\mathbb{R}^d$ and $t \leqslant d$. We should note that the underlying space
remains the $d$-dimensional Euclidean space $\mathbb{R}^d$. We will prove
that \AlgDC outputs a vector that is close to the
normalized orthogonal projection of $h^{\ast}$ onto
$\span \{ e_1, e_2, \ldots, e_t \}$. More
precisely, we define
\[ h^{\perp} = \frac{\sum_{i = 1}^t \langle e_i, h^{\ast} \rangle
	e_i}{\sqrt{\sum_{i = 1}^t \langle e_i, h^{\ast} \rangle^2}} . \]
	Then \AlgDC runs in $d - 1$ rounds, calls
	\AlgDCt $d - 1$ times, and outputs with probability
	at least $1 - (d - 1) \delta$ a vector $\hat{h}$ for which $\| \hat{h} -
	h^{\perp} \| \leqslant 5 \epsilon d$. In exactly similarly way as discussed
	above, we can conclude that \AlgDC runs in $d - 1$
	time and uses \AlgDCt $d - 1$ times. Also, again by
	the union bound, with probability at least $1 - (d - 1) \delta$, all outputs
	of \AlgDCt are a close estimate (within distance
	$\epsilon$) of their corresponding objective. Thus, by assuming that all
	the calls of \AlgDCt have been successful (which
	happens with probability at least $1 - (d - 1) \delta$), we use an inductive
	argument to prove that $\| \hat{h} - h^{\perp} \| \leqslant 5 \epsilon (d-1)$.
	We use induction on $t$. For $t = 2$ the result is clear. We now prove the
	result when $t = \tau$ assuming that it holds for all $t < \tau$. Without loss
	of generality, assume that the algorithm calls
	$\AlgDCt (e_1, e_2, \epsilon, \delta)$ and the
	vectors $e_1$ and $e_2$ in $E$ willl be replaced by the output of
	$\AlgDCt (e_1, e_2, \epsilon, \delta)$ which we
	denote by $\hat{e}_1$. We can write
	\[ h^{\perp} = \sum_{i = 1}^{\tau} c_i e_i = \hat{c}_1 h_1^{\perp} + \sum_{i =
		3}^{\tau} c_i e_i, \]
		where $c_i = \langle h^{\perp}, e_i \rangle$, $\hat{c}_1 = \sqrt{c_1^2 +
			c_2^2}$ and $h_1^{\perp} = \frac{c_1 e_1 + c_2 e_2}{\sqrt{c_1^2 + c_2^2}}$.
			Note that $h_1^{\perp} = \frac{c_1 e_1 + c_2 e_2}{\sqrt{c_1^2 + c_2^2}}$ is
			precisely the normalized orthogonal projection of $h^{\ast}$ (and also
			$h^{\perp}$) onto $\span \{ e_1, e_2 \}$. Using the
			above notation, we have
			\[ \| \hat{e}_1 - h_1^{\perp} \| \leqslant \epsilon . \]
			Recall that after obtaining $\hat{e}_1$ (the output of
			$\AlgDCt (e_1, e_2, \epsilon, \delta)$), the
			algorithm will recursively call $\AlgDC (\hat{e}_1,
			e_3, e_4, \ldots, e_{\tau})$. Suppose that the output of this call is denoted
			by $\hat{h}^{\perp}$. By the assumption of the induction, the output
			$\hat{h}^{\perp}$ will be within the distance $5 \epsilon (\tau - 2)$ of
			the normalized orthogonal projection of $h^{\perp}$ onto
			$\span \{ \hat{e}_1, e_3, e_4, \ldots, e_{\tau}
			\}$, which we denote by $h'$. That is,
			\[ \| \hat{h}^{\perp} - h' \| \leqslant 5 \epsilon (\tau - 2) . \]
			We know that
			\[ h' = \frac{\langle h^{\perp}, \hat{e}_1 \rangle \hat{e}_1 + \sum_{i = 3}^\tau
				c_i e_i}{\sqrt{\langle h^{\perp}, \hat{e}_1 \rangle^2 + \sum_{i = 3}^\tau
					c_i^2}} . \]
					Now we will show that
					\[ \| h' - h^{\perp} \| \leqslant 5 \epsilon . \]
					If this is true, we will have
					\[ \| h^{\perp} - \hat{h}^{\perp} \| \leqslant \| \hat{h}^{\perp} - h' \| + \|
					h' - h^{\perp} \| \leqslant 5 \epsilon (\tau-1) \]
					and this completes the proof. Therefore, it suffices to show that $\| h' -
					h^{\perp} \| \leqslant 5 \epsilon$.
					
					We define $\beta = \sqrt{\langle h^{\perp}, \hat{e}_1 \rangle^2 + \sum_{i =
							3}^\tau c_i^2}$. Firstly, we have
							\begin{eqnarray}
						& &	\| h' - h^{\perp} \|\nonumber\\
						 & = & \left\| h^{\perp} - \frac{\langle h^{\perp},
								\hat{e}_1 \rangle \hat{e}_1 + \sum_{i = 3}^\tau c_i e_i}{\sqrt{\langle
									h^{\perp}, \hat{e}_1 \rangle^2 + \sum_{i = 3}^\tau c_i^2}} \right\| \nonumber\\
									& = & \left\| \frac{\beta h^{\perp} - \left( \langle h^{\perp}, \hat{e}_1
										\rangle \hat{e}_1 + \sum_{i = 3}^\tau c_i e_i \right)}{\beta} \right\|
										\nonumber\\
										& = & \left\| \frac{(\beta - 1) h^{\perp} + h^{\perp} - \left( \langle
											h^{\perp}, \hat{e}_1 \rangle \hat{e}_1 + \sum_{i = 3}^\tau c_i e_i
											\right)}{\beta} \right\| \nonumber\\
											& \leqslant & \left| \frac{1 - \beta}{\beta} \right| + \left\|
											\frac{h^{\perp} - \left( \langle h^{\perp}, \hat{e}_1 \rangle \hat{e}_1 +
												\sum_{i = 3}^\tau c_i e_i \right)}{\beta} \right\| . \label{eq:diff} 
												\end{eqnarray}
												Secondly, we have
												\begin{eqnarray*}
													| \beta^2 - 1 | & = & \left| \langle h^{\perp}, \hat{e}_1 \rangle^2 +
													\sum_{i = 3}^{\tau} c_i^2 - 1 \right|\\
													& = & \left| \langle h^{\perp}, \hat{e}_1 \rangle^2 + \sum_{i = 3}^{\tau}
													c_i^2 - \left( \hat{c}_1^2 + \sum_{i = 3}^{\tau} c_i^2 \right) \right|\\
													& = & | \langle h^{\perp}, \hat{e}_1 \rangle^2 - \hat{c}_1^2 |\\
													& = & | \langle h^{\perp}, \hat{e}_1 \rangle^2 - \langle h^{\perp},
													h_1^{\perp} \rangle^2 |\\
													& = & | \langle h^{\perp}, \hat{e}_1 \rangle - \langle h^{\perp},
													h_1^{\perp} \rangle | \cdot | \langle h^{\perp}, \hat{e}_1 \rangle + \langle
													h^{\perp}, h_1^{\perp} \rangle |\\
													& = & | \langle h^{\perp}, \hat{e}_1 - h_1^{\perp} \rangle | \cdot |
													\langle h^{\perp}, \hat{e}_1 \rangle + \langle h^{\perp}, h_1^{\perp}
													\rangle |\\
													& \leqslant & \| h^{\perp} \| \cdot \| \hat{e}_1 - h_1^{\perp} \| \cdot (\|
													h^{\perp} \| \cdot \| \hat{e}_1 \| + \| h^{\perp} \| \cdot \| h_1^{\perp}
													\|)\\
													& \leqslant & 2 \epsilon,
													\end{eqnarray*}
													where the last step follows from $\| h^{\perp} \| = \| \hat{e}_1 \| = \|
													h_1^{\perp} \| = 1$ and $\| \hat{e}_1 - h_1^{\perp} \| \leqslant \epsilon$.
													Since $\epsilon \leqslant 5 / 18 < 1 / 2$, we obtain
													\[ \beta \in \left[ \sqrt{1 - 2 \epsilon}, \sqrt{1 + 2 \epsilon} \right]
													\]
													and
													\begin{equation}
													\left| \frac{1 - \beta}{\beta} \right| \leqslant \max \left\{
													\frac{1}{\sqrt{1 - 2 \epsilon}} - 1, 1 - \frac{1}{\sqrt{1 + 2
															\epsilon}} \right\} \leqslant 2 \epsilon . \label{eq:2e}
															\end{equation}
															Thirdly, similarly we have
															\begin{eqnarray*}
																&  & \left\| h^{\perp} - \left( \langle h^{\perp}, \hat{e}_1 \rangle
																\hat{e}_1 + \sum_{i = 3}^\tau c_i e_i \right) \right\|\\
																& = & \left\| \sum_{i = 1}^\tau c_i e_i - \left( \langle h^{\perp}, \hat{e}_1
																\rangle \hat{e}_1 + \sum_{i = 3}^\tau c_i e_i \right) \right\|\\
																& = & \| c_1 e_1 + c_2 e_2 - \langle h^{\perp}, \hat{e}_1 \rangle \hat{e}_1
																\|\\
																& = & \| \langle h^{\perp}, h_1^{\perp} \rangle h_1^{\perp} - \langle
																h^{\perp}, \hat{e}_1 \rangle \hat{e}_1 \|\\
																& \leqslant & \| \langle h^{\perp}, h_1^{\perp} \rangle h^{\perp}_1 -
																\langle h^{\perp}, h_1^{\perp} \rangle \hat{e}_1 \| + \| \langle h^{\perp},
																h_1^{\perp} \rangle \hat{e}_1 - \langle h^{\perp}, \hat{e}_1 \rangle
																\hat{e}_1 \|\\
																& = & \| \langle h^{\perp}, h_1^{\perp} \rangle (h^{\perp}_1 - \hat{e}_1)
																\| + \| \langle h^{\perp}, h_1^{\perp} - \hat{e}_1 \rangle \hat{e}_1 \|\\
																& \leqslant & \| h^{\perp}_1 - \hat{e}_1 \| + \| h_1^{\perp} - \hat{e}_1
																\|\\
																& \leqslant & 2 \epsilon .
																\end{eqnarray*}
																Therefore
																\begin{eqnarray}
																&  & \left\| \frac{h^{\perp} - \left( \langle h^{\perp}, \hat{e}_1 \rangle
																	\hat{e}_1 + \sum_{i = 3}^\tau c_i e_i \right)}{\beta} \right\| \nonumber\\
																	& \leqslant & \frac{2 \epsilon}{\beta} \nonumber\\
																	& \leqslant & \frac{2 \epsilon}{\sqrt{1 - 2 \epsilon}} \nonumber\\
																	& \leqslant & 3 \epsilon, \label{eq:3e} 
																	\end{eqnarray}
																	where the last step follows from $\epsilon \leqslant 5 / 18$.
																	
																	Now, by plugging (\ref{eq:2e}) and (\ref{eq:3e}) into (\ref{eq:diff}) we get
																	\[ \| h' - h^{\perp} \| \leqslant 2 \epsilon + 3 \epsilon = 5
																	\epsilon . \]
																	Hence we have
																	\[ \| h^{\perp} - \hat{h}^{\perp} \| \leqslant \| \hat{h}^{\perp} - h' \| + \|
																	h' - h^{\perp} \| \leqslant 5 \epsilon (\tau - 2) + 5 \epsilon = 5
																	\epsilon (\tau-1) . \]

\end{document}